\newcommand{\citearefs}{\cite}
\newcommand{\wrt}{{w.r.t.}}
\newcommand{\ie}{{i.e.}}
\newcommand{\eg}{{e.g.}}
\DeclareMathOperator{\kldiv}{\textit{D}_{KL}}
\newcommand*{\KL}[2]{\kldiv\left(#1 \Vert \hspace*{0.05cm} #2\right)}
\newcommand*{\GL}[1]{\mathrm{GL}(#1)}
\newcommand*{\SO}[1]{\mathrm{SO}(#1)}
\newcommand*{\gl}[1]{\mathfrak{gl}(#1)}
\newcommand*{\so}[1]{\mathfrak{so}(#1)}
\newcommand*\dd{\mathop{}\!\mathrm{d}} 
\newcommand*{\inprod}[2]{\langle #1, #2 \rangle} 
\DeclareMathOperator{\Id}{\mathbf{I}_n}
\newcommand*{\sig}[1]{\alpha^{\boldsymbol{#1}}}
\newcommand*{\sigsq}[1]{(\alpha^{\boldsymbol{#1}})^2}
\newcommand{\set}[1]{\left\{ #1\right\}}
\newcommand{\msetch}[2]{
\left.\mathchoice
  {\left(\kern-0.25em\binom{#1}{#2}\kern-0.25em\right)}
  {\big(\kern-0.10em\binom{\smash{#1}}{\smash{#2}}\kern-0.10em\big)}
  {\left(\kern-0.10em\binom{\smash{#1}}{\smash{#2}}\kern-0.10em\right)}
  {\left(\kern-0.10em\binom{\smash{#1}}{\smash{#2}}\kern-0.10em\right)}
\right.}
\newcommand{\norm}[1]{\left\lVert#1\right\rVert}
\definecolor{gred}{HTML}{DB4437}
\definecolor{gblue}{HTML}{4285F4}
\definecolor{ggreen}{HTML}{0F9D58}
\definecolor{gyellow}{HTML}{F4B400}
\definecolor{unired}{HTML}{A6192E}
\definecolor{uniblue}{HTML}{002D72}
\definecolor{unigreen}{HTML}{115740}
\definecolor{tabblue}{HTML}{1f77b4}
\definecolor{taborange}{HTML}{ff7f0e}
\definecolor{tabgreen}{HTML}{2ca02c}
\definecolor{tabred}{HTML}{d62728}
\definecolor{tabpurple}{HTML}{9467bd}
\definecolor{tabbrown}{HTML}{8c564b}
\definecolor{tabpink}{HTML}{e377c2}
\definecolor{tabgray}{HTML}{7f7f7f}
\definecolor{tabolive}{HTML}{bcbd22}
\definecolor{tabcyan}{HTML}{17becf}
\definecolor{tabpine}{HTML}{246C60}
\pgfplotsset{compat=1.15}
\definecolor{shadecolor}{HTML}{000000}
\def\rad{1.5}
\newcommand{\ViewAzimuth}{0}
\newcommand{\ViewElevation}{0}
\pgfplotsset{select coords between index/.style 2 args={
    x filter/.code={
        \ifnum\coordindex<#1\fi
        \ifnum\coordindex>#2\fi
    }
}}
\newcommand{\gettikzcoordinates}[3]{%
	\tikz@scan@one@point\pgfutil@firstofone#1\relax
	\pgfmathsetmacro{\myx}{round(0.99626*\the\pgf@x/0.0283465)/1000}
	\pgfmathsetmacro{\myy}{round(0.99626*\the\pgf@y/0.0283465)/1000}
	\pgfmathsetmacro{\myz}{round(0.99626*\the\pgf@z/0.0283465)/1000}
	\global\edef#2{(\myx,\myy,\myz)}%
}
\tikzset{viewport/.style 2 args={
		x={({cos(-#1)*1cm},{sin(-#1)*sin(#2)*1cm})},
		y={({-sin(-#1)*1cm},{cos(-#1)*sin(#2)*1cm})},
		z={(0,{cos(#2)*1cm})}
}}
\pgfplotsset{only foreground/.style={
		restrict expr to domain={rawx*\CameraX + rawy*\CameraY + rawz*\CameraZ}{-0.05:100},
}}
\pgfplotsset{only background/.style={
		restrict expr to domain={rawx*\CameraX + rawy*\CameraY + rawz*\CameraZ}{-100:0.05}
}}
\def\addFGBGplot[#1]#2;{
	\addplot3[#1,only background, opacity=0.25] #2;
	\addplot3[#1,only foreground] #2;
}
\pgfplotsset{select coords between index/.style 2 args={
    x filter/.code={
        \ifnum\coordindex<#1\fi
        \ifnum\coordindex>#2\fi
    }
}}
\newcommand{\drawpath}[4]{
	\addplot3[
		mark=none,
		line width = .5,
		]  coordinates {
			(0,0,0) };
	\addplot3[
		color=#3,
		only marks,
		,mark options={draw=#4, fill=#4, scale=0.7},
		select coords between index = {0}{0}
		] table [
			x expr = \thisrowno{0}*#2,
			y expr = \thisrowno{1}*#2,
			z expr = \thisrowno{2}*#2,
			col sep=comma
			]  {#1};
	\addplot3[
	color=#3,
	mark=none,
	line width = .5,
	] table [
		x expr = \thisrowno{0}*#2,
		y expr = \thisrowno{1}*#2,
		z expr = \thisrowno{2}*#2,
		col sep=comma
		]  {#1};
}
\newtheorem{theorem}{Theorem}
\newtheorem{lemma}[theorem]{Lemma}
\newtheorem{proposition}[theorem]{Proposition}
\newtheorem{corollary}[theorem]{Corollary}
\newtheorem{definition}[theorem]{Definition}
\newtheorem*{remark*}{Remark}
\newcommand{\tikzcircle}[2][red,fill=red]{\tikz[baseline=-0.7ex]\draw[#1,radius=#2] (0,0) circle ;}
\newcommand{%
    \tikzexternalenable
    \tikzsetnextfilename{}%
    \input{.tikz}%
    \tikzexternaldisable
}[1]{%
    \tikzexternalenable
    \tikzsetnextfilename{#1}%
    \input{#1.tikz}%
    \tikzexternaldisable
}
\title{Latent SDEs on Homogeneous Spaces}
\author{%
    \textbf{Sebastian Zeng, Florian Graf, Roland Kwitt}\\
    University of Salzburg, Austria\\
    \texttt{\{sebastian.zeng, florian.graf, roland.kwitt\}@plus.ac.at} \\
}
\date{}
\renewcommand \thepart{}
\renewcommand \partname{}
\begin{document}

\maketitle
\vspace{-0.1cm}
\begin{abstract}
    \vspace{-0.1cm}
    We consider the problem of variational Bayesian inference in a latent variable model where a (possibly complex) observed stochastic process is governed by the solution of a latent stochastic differential equation (SDE). Motivated by the challenges that arise when trying to learn an (almost arbitrary) latent neural SDE from data, such as efficient gradient computation, we take a step back and study a specific subclass instead. In our case, the SDE evolves on a homogeneous latent space and is induced by stochastic dynamics of the corresponding (matrix) Lie group. In learning problems, SDEs on the unit $n$-sphere are arguably the most relevant incarnation of this setup. Notably, for variational inference, the sphere not only facilitates using a truly uninformative prior, but we also obtain a particularly simple and intuitive expression for the Kullback-Leibler divergence between the approximate posterior and prior process in the evidence lower bound. Experiments demonstrate that a latent SDE of the proposed type can be learned efficiently by means of an existing one-step geometric Euler-Maruyama scheme. Despite restricting ourselves to a less rich class of SDEs, we achieve competitive or even state-of-the-art results on various time series interpolation/classification problems.
\end{abstract}
\begin{refsegment}

\vspace{-0.3cm}
\section{Introduction}
\label{section:introduction}

Learning models for the dynamic behavior of a system from a collection of recorded observation sequences over time is of great interest in science and engineering. Applications can be found, \eg, in finance \cite{Black73a}, physics \cite{Pavliotis14a} or population dynamics \cite{Arato03a}. While the traditional approach of adjusting a prescribed differential equation (with typically a few parameters) to an observed system has a long history, the advent of neural ordinary differential equations (ODEs) \cite{Chen18a} has opened the possibility for learning (almost) arbitrary dynamics by parameterizing the vector field of an ODE by a neural network. This has not only led to multiple works, \eg, \cite{Dupont19a,Massaroli20a,Yildiz19a,Kidger20}, elucidating various aspects of this class of models, but also to extensions towards stochastic differential equations (SDEs) \cite{Liu19a,Tzen19a,Tzen19b,Li20a,Kidger21a}, where one parametrizes the \emph{drift} and \emph{diffusion} vector fields. Such \emph{neural SDEs} can, by construction, account for potential stochasticity in a system and offer increased flexibility, but also come at the price of numerical instabilities, high computational/storage costs, or technical difficulties when backpropagating gradients through the typically more involved numerical integration schemes.

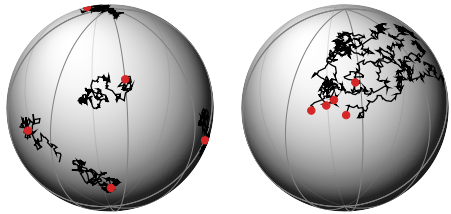
\begin{wrapfigure}[11]{r}{5.9cm}
    \vspace{-0.42cm}
    \centering
    \begingroup
    \tikzexternalenable
    \tikzsetnextfilename{figs/path_on_sphere/intro_figure1}%
    \def\rad{1.38}
\def\lcol{black}
\begin{tikzpicture}
    \pgfmathsetmacro{\CameraX}{sin(\ViewAzimuth)*cos(\ViewElevation)}
	\pgfmathsetmacro{\CameraY}{-cos(\ViewAzimuth)*cos(\ViewElevation)}
	\pgfmathsetmacro{\CameraZ}{sin(\ViewElevation)}
	\path[use as bounding box] (-\rad,-\rad) rectangle (\rad,\rad); 
		\clip (0,0) circle (\rad);
		\begin{scope}[transform canvas={rotate=-20}]
			\shade[shading=goodsphere] (0,0) circle (\rad);
		\end{scope}
	\begin{axis}[
		at = {(0,0)},
		hide axis,
		view={-10}{18},     
		every axis plot/.style={very thin},
		disabledatascaling,                      
		anchor=origin,                           
		viewport={-10}{18}, 
		rotate around x = 0, rotate around y = 0, rotate around z=0
		]
		\foreach \t in {0,45,...,180}
		\addFGBGplot[domain=0:2*pi, samples=100, samples y=1, color=gray] (\rad*cos(\t)*sin(deg(x)), {\rad*sin(\t)*sin(deg(x))}, {\rad*cos(deg(x))});
	\end{axis}
	\begin{axis}[
		at = {(0,0)},
		hide axis,
		view={\ViewAzimuth}{\ViewElevation},     
		every axis plot/.style={very thin},
		disabledatascaling,                      
		anchor=origin,                           
		viewport={\ViewAzimuth}{\ViewElevation}, 
		]
		\drawpath{figs/path_on_sphere/paths/prior0.csv}{\rad}{\lcol}{tabred}
		\drawpath{figs/path_on_sphere/paths/prior1.csv}{\rad}{\lcol}{tabred}
		\drawpath{figs/path_on_sphere/paths/prior2.csv}{\rad}{\lcol}{tabred}
		\drawpath{figs/path_on_sphere/paths/prior4.csv}{\rad}{\lcol}{tabred}
		\drawpath{figs/path_on_sphere/paths/prior5.csv}{\rad}{\lcol}{tabred}
	\end{axis}
\end{tikzpicture}%
    \tikzexternaldisable
    \hspace{0.1cm}%
    \tikzexternalenable
    \tikzsetnextfilename{figs/path_on_sphere/intro_figure2}%
    \def\rad{1.38}
\def\lcol{black}
\begin{tikzpicture}
	\pgfmathsetmacro{\CameraX}{sin(\ViewAzimuth)*cos(\ViewElevation)}
	\pgfmathsetmacro{\CameraY}{-cos(\ViewAzimuth)*cos(\ViewElevation)}
	\pgfmathsetmacro{\CameraZ}{sin(\ViewElevation)}
	\path[use as bounding box] (-\rad,-\rad) rectangle (\rad,\rad); 
		\clip (0,0) circle (\rad);
		\begin{scope}[transform canvas={rotate=-20}]
			\shade[shading=goodsphere] (0,0) circle (\rad);
		\end{scope}
	\begin{axis}[
		at = {(0,0)},
		hide axis,
		view={-10}{18},     
		every axis plot/.style={very thin},
		disabledatascaling,                      
		anchor=origin,                           
		viewport={-10}{18}, 
		rotate around x = 0, rotate around y = 0, rotate around z=0
		]
		\foreach \t in {0,45,...,180}
		\addFGBGplot[domain=0:2*pi, samples=100, samples y=1, color=gray] (\rad*cos(\t)*sin(deg(x)), {\rad*sin(\t)*sin(deg(x))}, {\rad*cos(deg(x))});
	\end{axis}
	\begin{axis}[
		at = {(0,0)},
		hide axis,
		view={\ViewAzimuth}{\ViewElevation},     
		every axis plot/.style={very thin},
		disabledatascaling,                      
		anchor=origin,                           
		viewport={\ViewAzimuth}{\ViewElevation}, 
		]
		\drawpath{figs/path_on_sphere/paths/posterior0.csv}{\rad}{\lcol}{tabred}
		\drawpath{figs/path_on_sphere/paths/posterior1.csv}{\rad}{\lcol}{tabred}
		\drawpath{figs/path_on_sphere/paths/posterior2.csv}{\rad}{\lcol}{tabred}
		\drawpath{figs/path_on_sphere/paths/posterior3.csv}{\rad}{\lcol}{tabred}
		\drawpath{figs/path_on_sphere/paths/posterior4.csv}{\rad}{\lcol}{tabred}		
	\end{axis}
\end{tikzpicture}%
    \tikzexternaldisable

    \endgroup
    \vspace{-0.26cm}
    \caption{Sample paths from a prior (\textbf{left}) and posterior (\textbf{right}) process on the sphere. Initial values are marked by \tikzcircle[tabred,fill=tabred]{2pt}\,.}
    \label{fig:random_walk} 
  \end{wrapfigure}
  While several recent developments to address these challenges exist, such as \cite{Li20a} or \cite{Kidger21a}, it is important to note that neural ODEs or SDEs are most often used in settings where they serve as a model for some sort of (unobserved) latent dynamics. Such \emph{latent ODE} or \emph{latent SDE} models, cf. \cite{Chen18a,Rubanova19a,Li20a}, hinge on the premise that the available observations are determined by an underlying continuous-time latent state (that might be subject to uncertainty). If one postulates that the observed dynamics are indeed governed by \emph{simpler} latent dynamics that are easier to model, the question naturally arises whether a smaller subclass of SDEs, which can be solved efficiently and with less technical difficulty, may suffice for accurately modeling real-world phenomena.
  Specifically promising are SDEs that arise from the action of a Lie group on a  homogeneous space, as they
  come with (1) an intuitive geometric interpretation, (2) a rich theory \cite{Ito50a,Applebaum14a,Chirikjian12a}, and (3) easy-to-implement
  numerical solvers with strong convergence guarantees \cite{Marjanovic15a,Marjanovic18a,Piggott16a,Muniz22a}.

\vspace{-0.3cm}
\paragraph{Contribution.} In this work, we take a step back from trying to learn
(in theory) arbitrary SDEs and instead consider the very subclass
mentioned above, \ie, SDEs that evolve on a homogeneous latent space as the result of the Lie group action. Numerical solutions to such SDEs are computed with a one-step geometric Euler-Maruyama scheme. The latter, combined with the assumption of a low latent space dimension enables backpropagating gradients through the SDE solver during learning via the ``discretize-then-optimize'' strategy. In the variational Bayesian learning regime, we instantiate
our construction by learning latent SDEs on the unit $n$-sphere where the formula for the Kullback-Leibler (KL) divergence (within the evidence lower bound) between approximate posterior processes and an uninformative prior becomes surprisingly simple. Experiments on a variety of datasets provide empirical evidence that the subclass of considered SDEs is sufficiently expressive to achieve competitive or state-of-the-art performance on various tasks.
\vspace{-0.2cm}

\section{Related work}
\label{section:sec_relatedwork}
Our work is mainly related to advances in the neural ODE/SDE literature
and recent developments in the area of numerical methods for solving SDEs on Lie groups.

\paragraph{Neural ODEs and SDEs.}{Since the introduction of neural ODEs in \cite{Chen18a}, many works
have proposed extensions to the paradigm of parameterizing the vector fields of an ODE by neural networks,
ranging from more expressive models \cite{Dupont19a} to higher-order variants \cite{Yildiz19a}. As ODEs are
determined by their initial condition, \cite{Kidger20,Morrill21a} have also introduced a variant that can adjust
trajectories from subsequent observations. 
Neural SDEs \cite{Tzen19a,Tzen19b,Jia19a,Liu19a,Song21a,Park22a,Hasan22}, meaning that an SDE's drift \emph{and} diffusion coefficient are parametrized by neural networks, represent another natural extension of the neural ODE paradigm and can account for uncertainty in the distribution over paths. 
From a learning perspective,
arguably the most common way of fitting neural ODEs/SDEs to data is the Bayesian learning paradigm, where learning is understood
as posterior inference \cite{Kingma14a}, although neural SDEs might as well be trained as GANs \cite{Kidger21a}. In the Bayesian setting,
one may choose to model uncertainty (1) only over the initial latent state, leading to latent ODEs \cite{Chen18a, Rubanova19a},
or (2) additionally over all chronologically subsequent latent states in time, leading to latent SDEs \cite{Li20a,Hasan22}; in the first case, one only needs to select a
suitable approximate posterior over the initial latent state whereas, in the second case, the approximate posterior is defined over paths (\eg, in earlier
work \cite{Archambeau07} chosen as a Gaussian process). Overall, with recent progress along the line of (memory) efficient computation of
gradients to reversible SDE solvers \cite{Li20a, Kidger21b}, neural SDEs have become a powerful approach to model real-world phenomena under uncertainty.}

\paragraph{SDEs on matrix Lie groups.} In the setting of this work, SDEs are constructed via the action of a matrix Lie group on the corresponding homogeneous space. In particular, an SDE on the Lie group will translate into an SDE in the homogeneous space. Hence, numerical integration schemes that retain the Lie group structure are particularly relevant. Somewhat surprisingly,
despite a vast amount of literature on numerical integration schemes for ODEs that evolve in Lie groups and which retain the Lie group structure under discretization, \eg, \cite{Iserles00a, Munthe-Kaas99a,Munthe-Kaas98a}, similar schemes for SDEs have only appeared recently. One incarnation, which we will use later on, is the \emph{geometric} Euler-Maruyama scheme from \cite{Marjanovic15a,Marjanovic18a} for Itô SDEs. For the latter, \cite{Piggott16a} established convergence in a strong sense of order 1.
Stochastic Runge-Kutta--Munthe-Kaas schemes \cite{Muniz22a} of higher order were introduced just recently, however, at the cost of being numerically more involved and perhaps less appealing from a learning perspective. Finally, we remark that \cite{Park22b} introduces neural SDEs on Riemannian manifolds. However, due to the general nature of their construction, it is uncertain whether this approach can be transferred to a Bayesian learning setting and whether the
intricate optimization problem of minimizing a discrepancy between measures scales to larger, high-dimensional datasets.
\clearpage

\section{Methodology}
\label{section:sec_methodology}
\subsection{Preliminaries}
\label{subsection:preliminaries}

In the problem setting of this work, we assume access to a repeated (and partially observed) record of the continuous state history of a random dynamical system in observable space over time. The collected information is available in the form of a dataset of $N$ multivariate time series $\mathbf{x}^1,\ldots,\mathbf{x}^N$. In search for a reasonable model that describes the data-generating system, we consider a continuous stochastic process $X: \Omega \times [0,T] \to \mathbb{R}^{d}$ where each data sample refers to a continuous path $\mathbf{x}^i: [0,T] \to \mathbb{R}^{d}$. This path is assumed to be part of a collection of i.i.d. realizations sampled from the path distribution of the process $X$. As common in practice, we do not have access to the \emph{entire} paths $\mathbf{x}^i$, but only to observations $\mathbf{x}^{i}(t) \in \mathbb{R}^d$ at a finite number of time points $t=t_k$ that can vary between the $\mathbf{x}^i$ and from which we seek to learn $X$.

Fitting such a process to data can be framed as posterior inference in a Bayesian learning setting, where the data generating process is understood as a latent variable model of the form shown in  the figure below: first, a latent path $\mathbf{z}^i$ is drawn from a parametric prior distribution $p_{\boldsymbol{\theta}^*}(\mathbf{z})$ with true parameters $\boldsymbol{\theta}^*$; second, an observation path $\mathbf{x}^{i}$ is drawn from the conditional distribution $p_{\boldsymbol{\theta}^*}(\mathbf{x} | \mathbf{z}^i)$.

\begin{wrapfigure}{r}[-0.2cm]{0.18\textwidth}
    \vspace{-0.5cm}
    \centering
    \includegraphics[width=0.18\textwidth]{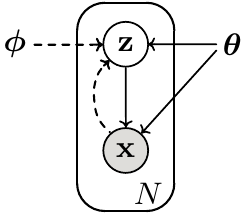}
  \end{wrapfigure}

Essentially, this is the problem setting of \cite{Kingma14a}; however, the latent variables $\mathbf{z}$, as well as
the observations $\mathbf{x}^{i}$ are path-valued and the true posterior $p_{\boldsymbol{\theta}}(\mathbf{z} | \mathbf{x})$ is intractable in general. To efficiently perform variational inference (\eg, with respect to $\mathbf{z}$) and to fit
the model to observations, we choose a tractable approximate posterior $q_{\boldsymbol{\phi}}(\mathbf{z} | \mathbf{x})$ from some family of parametric distributions, a suitable prior $p_{\boldsymbol{\theta}}(\mathbf{z})$, and seek to maximize the evidence lower bound 
(ELBO)

\vspace{-0.1cm}
\begin{equation}
    \label{eqn:elbo_general}
    \log p_{\boldsymbol{\theta}}(\mathbf x^i)
    \ge
    \mathcal L(\boldsymbol{\theta}, \boldsymbol{\phi}; \mathbf{x}^{i})
    =
    -\KL{q_{\boldsymbol{\phi}}(\mathbf{z}|\mathbf{x}^{i})}{p_{\boldsymbol{\theta}}(\mathbf{z})}
    + \mathbb E_{\mathbf z\sim q_{\boldsymbol{\phi}}(\mathbf z| \mathbf{x}^{i})}\left[
        \log p_{\boldsymbol{\theta}}(\mathbf{x}^{i}| \mathbf z)\right]
    \enspace.
\end{equation}

This optimization problem is well understood in the context of vector-valued random variables, and can be efficiently implemented (provided the choice of prior and approximate posterior allows a \emph{reparameterization trick}). However, the setting of a path-valued random latent variable comes with additional challenges.

\paragraph{Parametric families of path distributions.} To efficiently perform approximate variational inference, we need to select a prior and approximate posterior in a reasonable and tractable manner from parametric families of distributions.
To this end, we consider path distributions \cite{Chetrite15a} of stochastic processes that are implicitly defined as strong solutions to linear Itô SDEs of the form
\begin{equation}
    \label{sde}
    \dd Z_t = f(Z_t,t) \dd t + \sigma (Z_t, t) \dd W_t, \quad t \in [0,T], \quad Z_0 \sim \mathcal{P} \enspace.
\end{equation}
$W$ is a vector-valued Wiener process, $f$ and $\sigma$ denote vector-valued drift and matrix-valued diffusion coefficients, respectively, and $Z_0$ is a multivariate random variable that follows some distribution $\mathcal P$. To ensure the existence of a unique, strong solution to Eq.~\eqref{sde}, we require that $f,\sigma$ and $Z_0$ satisfy the assumptions of \cite[Theorem 5.5]{Oksendal95}.
As in the seminal work on neural ODEs \cite{Chen18a}, one may parameterize the drift and diffusion coefficients via neural networks, yielding \emph{neural SDEs}. 
Given these prerequisites, maximizing the ELBO in Eq.~\eqref{eqn:elbo_general} requires computing gradients with respect to $\boldsymbol{\theta}$ and $\boldsymbol{\phi}$, which hinges on sample paths from the approximate posterior process, generated by some (possibly adaptive step-size) SDE solver. Unfortunately, as mentioned in \cref{section:introduction}, modeling (\ref{sde}) with flexible drift and diffusion coefficients comes at high computational and memory cost. 
In the following (\cref{subsection:sdes_on_homogeneous_spaces}), we introduce our construction for SDEs on homogeneous spaces, discuss a simple SDE solver, and elaborate on the specific form of the KL divergence term in the ELBO of Eq.~\eqref{eqn:elbo_general}. In \cref{subsection:sdeonunitsphere}, we then instantiate the SDE construction on the example of the rotation group $\SO{n}$ acting on the unit $n$-sphere.

\paragraph{Solving auxiliary tasks.} 
While learning SDEs of the form outlined above is an unsupervised problem, the overall approach 
can easily be extended to solve an auxiliary supervised task. For instance, we may have access to one target variable $y^i$ (\eg, a class-label) per time series or to one target variable per time point and time series, \ie, $y^i_{t_1}, \dots, y^i_{t_m}$ (\eg, a class-label or some real-valued quantity per time point). In these scenarios, we will additionally learn to map latent states $\mathbf z$ to target variables, and extend the ELBO from Eq.~\eqref{eqn:elbo_general} with an additive task-specific loss term. In practice, the overall optimization objective is then a weighted sum of (i) the KL divergence, (ii) the log-likelihood, and (iii) a task-specific supervised loss.

\subsection{SDEs on homogeneous spaces}
\label{subsection:sdes_on_homogeneous_spaces}

Drawing inspiration from manifold learning, \eg, \cite{Fefferman16,Connor21,Brown23}, we assume the unobservable latent stochastic process $Z$ evolves on a low dimensional manifold $\mathbb{M}\subset \mathbb R^n$. If $\mathbb M$ admits a transitive action by a Lie group $\mathcal G$, \ie, if it is a homogeneous $\mathcal G$-space, then variational Bayesian inference can be performed over a family of processes that are induced by the group action. Specifically, we consider \emph{one-point motions}, cf. \cite{Liao04}, \ie, processes in $\mathbb{M}$ of the form $Z_t = G_t \cdot Z_0$ for some stochastic process $G$ in a matrix Lie group $\mathcal G\subset \GL{n}$ that acts by matrix-vector multiplication.

We define this stochastic process $G$ as solution to the linear SDE 
\begin{equation}
    \label{eqn:itomatrixsde}
    \dd G_t = \left( \mathbf{V}_0(t) \dd t +  \sum_{i=1}^m  \dd w^i_t \mathbf {V}_i  \right) G_t,
    \quad
    G_0 = \Id \enspace .
\end{equation}

Herein, $w^i$ are independent scalar Wiener processes, $\mathbf V_1,\dots,\mathbf V_m \in \mathfrak g$ are fixed (matrix-valued) tangent vectors in the Lie algebra $\mathfrak g$, and $\Id$ is the $n\times n$ identity matrix. The drift function $\mathbf{V}_0 = \mathbf{K} + \mathbf{K}_\perp: [0,T] \to \mathbb{R}^{n \times n}$ consists of a tangential ($\mathfrak g$-valued) component $\mathbf{K}$ plus an orthogonal component $\mathbf{K}_{\perp}$ called pinning drift \cite{Marjanovic18a,Solo14}. It offsets the stochastic drift away from the Lie group $\mathcal G$ due to the quadratic variation of the Wiener processes $w^{i}$ and only depends on the noise components $\mathbf V_1, \dots, \mathbf V_m$ (and the geometry of $\mathcal G$).
In case the Lie group $\mathcal G$ is quadratic, \ie, defined by some fixed matrix $\mathbf P$ via the equation $\mathbf G^{\top} \mathbf P \mathbf G = \mathbf P$, then the pinning drift $\mathbf{K}_\perp$ can be deduced directly from the condition $\dd (\mathbf G^{\top} \mathbf P \mathbf G) = 0$ which implies that
\begin{equation}
    \label{eq:geom_cond}
    \mathbf{V}_0(t)= \mathbf{K}(t) + \frac{1}{2} \sum_{i=1}^{m} \mathbf{V}_i^2
    \enspace,
    \qquad \text{and} \qquad
    \mathbf{V}_i^\top \mathbf{P} + \mathbf{P} \mathbf{V}_i = \mathbf{0}
    \enspace.
\end{equation}
We include the main arguments of the derivation of Eq.~\eqref{eq:geom_cond} in
\cref{subsection:appendix:sdes_in_quadratic_lie_groups} and refer to \cite{Brockett73a} and \cite{Chirikjian12a} for broader details.

Finally, given an initial value $Z_0 \sim \mathcal P$ drawn from some distribution $\mathcal P$ on the manifold $\mathbb M$, the SDE for $G$ induces an SDE for the one-point motion $Z = G\cdot Z_0$ via $\dd Z_t = \dd G_t \cdot Z_0$.
Therefore, $Z$ solves
\begin{equation}
    \label{eq:SDEinM}
    \dd Z_t
    =  \left( \mathbf{V}_0(t) \dd t
    +  \sum_{i=1}^m \dd w^i_t \mathbf{V}_i \right) Z_t,
    \quad
    Z_0 \sim \mathcal P\enspace.
\end{equation}

\paragraph{Numerical solutions.}
To sample paths from a process $Z$ defined by an SDE as in Eq.~\eqref{eq:SDEinM}, we numerically solve the SDE by a one-step geometric Euler-Maruyama scheme \cite{Marjanovic18a} (listed in \cref{section:appendix:theory}). Importantly, this numerical integration scheme ensures that the approximate numerical solutions reside in the Lie group, has convergence of strong order 1 \cite{Piggott16a}, and can be implemented in a few lines of code (see \cref{section:appendix:implementation}).

The idea of the scheme is to iteratively solve the SDE in intervals $[t_j,t_{j+1}]$ of length $\Delta t$, using the relation
\begin{equation}
    \label{eqn:mckeangangolli}
    Z_{t} = G_{t} Z_0 = (G_{t}G_{t_{j}}^{-1}) Z_{t_j}
    \enspace.
\end{equation}
Notably, $G_t G_{t_{j}}^{-1}$ follows the same SDE as $G_t$, see Eq.~\eqref{eqn:itomatrixsde}, but with initial value $\Id$ at $t=t_j$ instead of at $t=0$. Moreover, if $\Delta t$ is sufficiently small, then $\smash{G_t G_{t_{j}}^{-1}}$ stays close to the identity and reparametrizing $G_t G_{t_j}^{-1} = \exp(\Omega_t)$ by the matrix (or Lie group) exponential is feasible. The resulting SDE in the Lie algebra is \emph{linear} and can thus be solved efficiently with an Euler-Maruyama scheme. Specifically, $\Omega_{t_{j+1}} = \mathbf V_0(t_j) \Delta t + \sum_{i=1}^m \Delta w_j^i \mathbf V_i$, where $\Delta w_j^i \sim \mathcal{N}(0, \Delta t)$ and we can differentiate through the numeric solver by leveraging the well-known reparametrization trick for the Gaussian distribution.

\paragraph{Kullback-Leibler (KL) divergence.}
For approximate inference in the variational Bayesian learning setting of \cref{subsection:preliminaries}, we seek to maximize the ELBO in Eq.~\eqref{eqn:elbo_general}. In our specific setting, the prior $p_{\boldsymbol\theta}(\mathbf z)$ and approximate posterior distribution $q_{\boldsymbol{\phi}}(\mathbf z |\mathbf x)$ of \emph{latent paths} $\mathbf z$ are determined by an SDE of the form as in Eq.~\eqref{eq:SDEinM}.
In case these SDEs have the same diffusion term, \ie, the same $\mathbf V_i, \ i>0$, then the KL divergence $\KL{q_{\boldsymbol \phi}(\mathbf z| \mathbf x)}{p_{\boldsymbol{\theta}}(\mathbf z)}$ in the ELBO is finite and can be computed via  Girsanov's theorem.

Specifically, if $\mathbf{V}_{0}^{\textnormal{prior}}$ and $\mathbf{V}_{0}^{\textnormal{post}}$ denote the respective drift functions and $\mathbf{V}_{0}^{\Delta} = \mathbf{V}_{0}^{\textnormal{prior}} - \mathbf{V}_{0}^{\textnormal{post}}$, then 
\begin{equation}
    \KL{q_{\boldsymbol{\phi}}(\mathbf z| \mathbf x)}{p_{\boldsymbol{\theta}}(\mathbf z)}
    =
    \frac 1 2 \int_0^T \int_{\mathbb{M}} q_{Z_t}(\mathbf z)
         \left[\mathbf{V}_{0}^{\Delta}(t) \mathbf{z}\right]^{\top} \mathbf\Sigma^{+}(\mathbf{z}) \left[\mathbf{V}_{0}^{\Delta}(t) \mathbf{z}\right]
    \dd \mathbf z \dd t
    \enspace,
\end{equation}
where $q_{Z_t}$ is the marginal distribution of the approximate posterior process $Z$ at time $t$ and $\boldsymbol{\Sigma}^{+}(\mathbf z)$ is the pseudo inverse of $\boldsymbol{\Sigma}(\mathbf{z}) = \sum_{i=1}^{m} \mathbf V_i \mathbf{z} \mathbf{z}^{\top} \mathbf{V}_{i}^{\top}$.
For a derivation of this result, we refer to \cite{Opper19a} and \cref{subsection:appendix:kldivs}.

\subsection{SDEs on the unit \texorpdfstring{$\boldsymbol{n}$}{n}-sphere}
\label{subsection:sdeonunitsphere}

The unit $n$-sphere $\mathbb{S}^{n-1}$ frequently occurs in many learning settings, \eg, see \cite{Liu17, Cohen18, Mettes19, Davidson18a, Liu21, Tan22}. It is a homogeneous space in context of the group $\textrm{SO}(n)$ of rotations.
For modeling time series data, $\mathbb S^{n-1}$ is appealing due to its intrinsic connection to trigonometric functions and, in a Bayesian learning
setting, it is appealing since compactness allows selecting the uniform distribution $\mathcal U_{\mathbb{S}^{n-1}}$ as an uninformative prior on the initial SDE state $Z_0$.

The unit $n$-sphere also perfectly fits into the above framework, as $\SO{n}$ is the identity component of $\textrm{O}(n)$, \ie, the quadratic Lie group defined by $\mathbf{P} = \Id$. Its Lie algebra $ \smash{\so{n} = \set{\mathbf A: \mathbf A + \mathbf A^{\top}= \mathbf 0}}$ consists of skew-symmetric $n\times n$ matrices, and has a basis $\{\mathbf{E}_{kl}: 1\leq k<l\leq n\}$ with $\mathbf{E}_{kl} = \mathbf{e}_k \mathbf{e}_l^{\top} - \mathbf{e}_l \mathbf{e}_k^{\top}$, where $\mathbf{e}_k$ is the $k$-th standard basis vector of $\mathbb{R}^n$.
Importantly, if
\begin{equation*}
    \sig \theta \neq 0  \enspace, \quad
    \set{\mathbf{V}_i}_{i=1}^{n(n-1)/2} = \set{\sig{\theta} \mathbf{E}_{kl}}_{1\le k<l \le n}\quad\text{and}\quad\mathbf V_0(t) = - \sigsq{\theta} \frac {n-1} 2 \Id \enspace ,
\end{equation*}
then Eq.~\eqref{eq:geom_cond} is fulfilled and the SDE in Eq.~\eqref{eq:SDEinM} becomes, after some transformations,
\begin{equation}
    \label{eq:driftlessprior}
    \dd Z_t = - \sigsq{\theta} \frac{n-1}{2} Z_t \dd t + \sig{\theta} (\Id - Z_t Z_t^\top) \dd W_t
    \enspace, \quad
    Z_0 \sim p_{\boldsymbol \theta}(Z_0) = \mathcal U_{\mathbb S^{n-1}}\enspace,
\end{equation}
whose solution is a rotation symmetric and driftless\footnote{The drift only consists of the pinning component that is orthogonal to the Lie algebra $\so{n}$.} stochastic process on the unit $n$-sphere, \ie, the perfect \emph{uninformative prior process} in our setting.\

In fact, in case of $\sig\theta=1$, Eq.~\eqref{eq:driftlessprior} is known as the \emph{Stroock} equation \cite{Stroock71a,Ito75a} for a spherical Brownian motion.

Similarly, we define an approximate posterior process as solution to an SDE with \emph{learnable} drift and \emph{learnable} (parametric) distribution $\smash{\mathcal P_0^{\boldsymbol{\phi}}}$ on the initial state $Z_0$, \ie, 
\begin{equation}
    \label{eqn:approximate_posterior_son}
        \dd Z^{\boldsymbol{\phi}}_t = \left( \mathbf K^{\boldsymbol{\phi}}(\mathbf x )(t)- \sigsq{\phi} \frac{n-1}{2}  \Id \right) Z_t \dd t + \sig{\phi} (\Id - Z_t Z_t^\top) \dd W_t
        \enspace, \quad
        Z_0 \sim \mathcal P_0^{\boldsymbol{\phi}}(Z_0|\mathbf x)
        \enspace.
\end{equation}
Exemplary sample paths from solutions of Eq.~\eqref{eq:driftlessprior} and Eq.~\eqref{eqn:approximate_posterior_son} with constant drift are shown in \cref{fig:random_walk}.
We want to highlight that, in contrast to the parametric function $\mathbf K^{\boldsymbol{\phi}}(\mathbf x): [0,T] \to \mathfrak {so}(n)$ (as part of the drift coefficient) and the parametric distribution $\mathcal P_0^{\boldsymbol{\phi}}$, the diffusion term does not depend on the evidence $\mathbf x$ but \emph{only}  on $\boldsymbol{\phi}$. This allows selecting the same diffusion components $\smash{\sig{\phi} = \sig{\theta}}$ in the posterior and prior process to ensure a finite KL divergence. 

    The latter has a particularly simple form (cf. \cref{subsection:appendix:kldivs})
\begin{equation}
    \label{eqn:klonsphere}
    \KL{q_{\boldsymbol{\phi}}(\mathbf z| \mathbf x)}{p_{\boldsymbol{\theta}}(\mathbf z)}
    {=}
    \KL{\mathcal P_0^{\boldsymbol{\phi}}(Z_0|\mathbf x)}{\mathcal U_{\mathbb S^{n-1}}}
    +
    \frac 1 {2 \sigsq{\theta}} \int_0^T \hspace*{-0.25cm} \int_{\mathbb{S}^{n-1}} \hspace*{-0.4cm} q_{Z_t}(\mathbf z) \hspace*{-0.05cm}
        \norm{\mathbf K^{\boldsymbol{\phi}}(\mathbf x )(t) \mathbf z}^2
    \dd \mathbf{z} \dd t \, .
\end{equation}

As customary in practice, we may replace the second summand with a Monte Carlo estimate and sum over all available time steps.
Alternatively, using the upper bound
$$\int_{\mathbb{S}^{n-1}} \hspace*{-0.4cm} q_{Z_t}(\mathbf z) \hspace*{-0.05cm}
\norm{\mathbf K^{\boldsymbol{\phi}}(\mathbf x )(t) \mathbf z}^2
\dd \mathbf{z} \le \norm{\mathbf K^{\boldsymbol \phi}(\mathbf x )(t)}_F$$
is equally possible. Notably, the second summand in the KL divergence of
Eq.~\eqref{eqn:klonsphere} has a quite intuitive interpretation: the squared norm is akin to a weight decay penalty, only that it penalizes
a large (in norm) drift or, in other words, a group action that represents a large rotation.
\subsection{Parameterizing the approximate posterior}
\label{sec:parametrizing_posterior}

As indicated in Eq.\eqref{eqn:approximate_posterior_son}, $\boldsymbol \phi$ enters
the SDE associated with the approximate posterior $q_{\boldsymbol{\phi}}(\mathbf z| \mathbf x)$ in two ways: \emph{first}, through the parametric distribution
on initial states $Z_0$, and \emph{second}, via the drift component $\mathbf K^{\boldsymbol \phi}(\mathbf x)$.
For $\smash{\mathcal P_0^{\boldsymbol{\phi}}}$, we choose the power spherical \cite{DeCao20a} distribution $\mathcal S$
with location parameter $\boldsymbol{\mu}^{\boldsymbol \phi}(\mathbf x)$ and concentration parameter $\kappa^{\boldsymbol \phi}(\mathbf x)$, \ie, $\smash{\mathcal P_0^{\boldsymbol{\phi}}(Z_0|\mathbf x) = \mathcal {S}(\boldsymbol{\mu}^{\boldsymbol \phi}(\mathbf x), \kappa^{\boldsymbol \phi}(\mathbf x))}$.
We realize $\boldsymbol{\mu}^{\boldsymbol \phi}$, $\kappa^{\boldsymbol \phi}$ and $\mathbf K^{\boldsymbol \phi}$ by a task specific recognition network capable of learning a representation for time series $\mathbf{x}^i$ (with possibly missing time points and missing observations per time point).

In our implementation, we primarily use multi-time attention networks (mTAND) \cite{Shukla21a} to map a sequence of time-indexed observations to the parameters of the approximate posterior SDE. Vector-valued sequences are fed directly to mTAND, whereas image-valued sequences are first passed through a CNN feature extractor.
Both variants are illustrated as types (A, \begin{tikzpicture}\draw[tabred,thick ] (0,0) circle(2.5pt);\end{tikzpicture}) and (B, \begin{tikzpicture}\draw[tabblue,thick ] (0,0) circle(2.5pt);\end{tikzpicture}), resp., in \cref{fig:recognition_network} and are used for interpolation tasks (\ie, imputation of missing observations at desired times) and in (per-time-point) classification problems.
In situations where we need to extrapolate from just one \emph{single} initial observation, we use type (C, \begin{tikzpicture}\draw[tabgreen,thick ] (0,0) circle(2.5pt);\end{tikzpicture}) from \cref{fig:recognition_network}, which directly maps this image to a representation.
In all cases, the so obtained representations are linearly mapped to the parameters ($\boldsymbol{\mu}^{\boldsymbol \phi}, \kappa^{\boldsymbol \phi}$) of the power spherical distribution and to a skew-symmetric matrix  $\mathbf K^{\boldsymbol \phi}$ representing the drift at time 0 (and evolving in time as described below).

\begin{wrapfigure}[15]{r}{0.38\textwidth}
    \vspace{-0.6 cm}
    \begin{center}
    \includegraphics[width=0.33\textwidth]{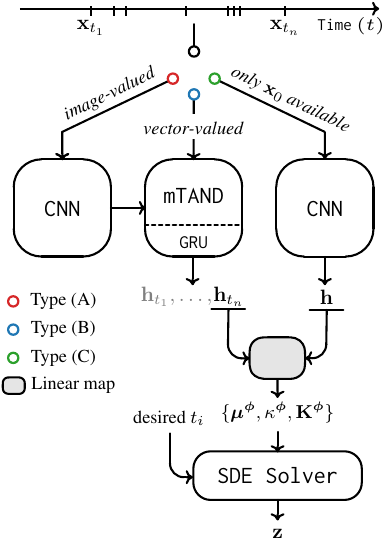}
    \caption{Recognition network types.\label{fig:recognition_network}}
    \end{center}
\end{wrapfigure}
\paragraph{Time evolution of the drift.} To ensure that $\mathbf K^{\boldsymbol \phi}(\mathbf x )(t)$ is
continuous and can be evaluated for every $t \in [0,T]$, we use Chebyshev polynomials\footnote{Chebyshev polynomials are recursively defined via $p_{i+1}(t) = 2t \hspace*{0.025cm} p_i(t) - p_{i-1}(t)$ with
$p_0(t) = 1$, $p_1(t) = t$.} as suggested by \cite{Massaroli20a}. We found this choice suitable
for our experiments in \cref{section:experiments}, but in principle any differentiable function $[0,T]\to \so{n}$ can be used.
In our implementation, a neural network maps a time series $\mathbf x$ to $K$ skew-symmetric matrices
$\smash{\mathbf K^{\boldsymbol \phi}_i(\mathbf x ) \in \so{n}}$ and subsequently $\mathbf K^{\boldsymbol \phi}(\mathbf x )(t)$ is computed by
\begin{equation}
    \label{eq:chebypoly}
    \mathbf K^{\boldsymbol \phi}(\mathbf x )(t)
    =
    \sum_{i=1}^K \mathbf K^{\boldsymbol \phi}_i(\mathbf x ) p_i(t)
    \enspace,
\end{equation}
where $p_i$ denotes the $i$-th Chebyshev polynomial.

\section{Empirical evaluation}
\label{section:experiments}
We evaluate our latent SDE approach on different tasks from four benchmark datasets, ranging from time series interpolation to per-time-point classification and regression tasks. Full dataset descriptions are listed in \cref{appendix:datasets}. While we stay as close as possible to the experimental setup of prior work for a fair comparison, we re-evaluated (using available reference implementations) selected results in one consistent regime, specifically in cases where we could not match the original results. In addition to the published performance measures, we mark the results of these re-runs in \textcolor{tabblue}{blue}. For each method, we train \emph{ten} different models and report the mean and standard deviation of the chosen performance measure. 
We note that if one were to report the same measures but over samples from the posterior, the standard deviations would usually be an order of magnitude smaller and can be controlled by the weighting of the KL divergence in the ELBO.

\paragraph{Training \& hyperparameters.} Depending on the particular (task-specific) objective, we optimize all model parameters using Adam \cite{Kingma15a} with a cyclic cosine learning rate schedule \cite{Fu19} (990 epochs with cycle length of 60 epochs and learning rate within [1e-6, 1e-3]). The weighting of the KL divergences in the ELBO is selected on validation splits (if available) or set to 1e-5 without any annealing schedule. We fix the dimensionality of the latent space to $n=16$ and use $K=6$
polynomials in Eq.~\eqref{eq:chebypoly}, unless stated otherwise.

\paragraph{Comparison to prior work.} We compare against several prior works that advocate for continuous-time models of latent variables. This includes the latent ODE approach of \cite{Rubanova19a,Chen18a} (ODE-ODE), the continuous recurrent units (CRU, f-CRU) work from \cite{Schirmer22a}, as well as the multi-time attention (mTAND-Enc, mTAND-Full) approach of \cite{Shukla21a}. On rotated MNIST, we additionally compare against the second-order latent ODE model from \cite{Yildiz19a} and on the pendulum regression task, we also list the recent S5 approach from \cite{Smith23a}. As \cite{Rubanova19a} is conceptually closest to our work but differs in the implementation of the recognition network (\ie, using an ODE that maps a time series to the initial state of the latent ODE), we also assess the performance of a mTAND-ODE variant; \ie, we substitute the ODE encoder from \cite{Rubanova19a} by the same mTAND encoder we use throughout. Finally, we remark that we only compare to approaches trainable on our hardware (see \cref{appendix:architecture_details}) within 24 hours.

\vspace{-0.1cm}
\subsection{Per-time-point classification \& regression}
\label{subsection:pertp_regression_classification}
\vspace{-0.1cm}
\paragraph{Human Activity.} This motion capture dataset\footnote{\url{https://doi.org/10.24432/C57G8X}} consists of time series collected from five individuals performing different activities. The time series are 12-dimensional, which corresponds to 3D coordinates captured by four motion sensors. Following the pre-processing regime of \cite{Rubanova19a}, we have 6,554 sequences available with 211 unique time points overall. The task is to assign each time point to one of seven (motion) categories.
The dataset is split into 4,194 sequences for training, 1,311 for testing, and 1,049 for validation.
We use a recognition model of type (B, \begin{tikzpicture}\draw[tabblue,thick ] (0,0) circle(2.5pt);\end{tikzpicture}) from \cref{fig:recognition_network} and linearly map latent states to class predictions.
For training, we minimize the cross-entropy loss evaluated on class predictions per available time point. Results are listed in \cref{table:human_activity}.

\begin{wraptable}{r}{5.3cm}
  \centering
  \vspace{-0.38cm}
  \small
  \caption{Human Activity.\label{table:human_activity}}
  \vspace{-1ex}
  \begin{tabular}{lcc}
  \toprule
  &\textbf{Accuracy} [\%] \\
  \midrule
  $^\dagger$ODE-ODE \cite{Rubanova19a} &   87.0 $\pm$  2.8 \\
  $^\dagger$mTAND-Enc \cite{Shukla21a}    &   90.7 $\pm$  0.2 \\
  $^\dagger$mTAND-Full \cite{Shukla21a}   &   \textbf{91.1} $\pm$  \textbf{0.2} \\
  mTAND-ODE                  &   90.4 $\pm$  0.2 \\
  \midrule
  {\bf Ours}                              &   90.6 $\pm$ 0.4 \\
  \bottomrule
  \multicolumn{2}{r}{$\dagger$ indicates results from \cite{Shukla21a}.}
  \end{tabular}
  \vspace{-0.5cm}
\end{wraptable}
Our primary focus in this experiment is to assess how a per-time-point classification loss affects our latent SDE model. As the linear classifier operates directly on the latent paths, the posterior SDEs need to account for possible label switches over time, \eg, a change of the motion class from \emph{standing} to \emph{walking}.
Latent paths of sequences with only a \emph{constant} class label across all time points must remain in a single decision region to minimize their error, while paths of sequences with label switches must cross decision boundaries and consequently increase their distance from the starting point; they literally must drift away.

We visually assess this intuition after repeating the experiment with a decreased latent space dimension of $n=3$ (\ie, $\mathbb S^2$), which reduces recognition accuracy only marginally to 90.2\%. 

\begin{wrapfigure}[15]{r}[-.4cm]{4.95cm}
  \vspace{-0.4cm}
  \centering
  \begin{tikzpicture}
    \node[inner sep=0pt] (kldiv) at (0,0)
      {\includegraphics[width=135pt]{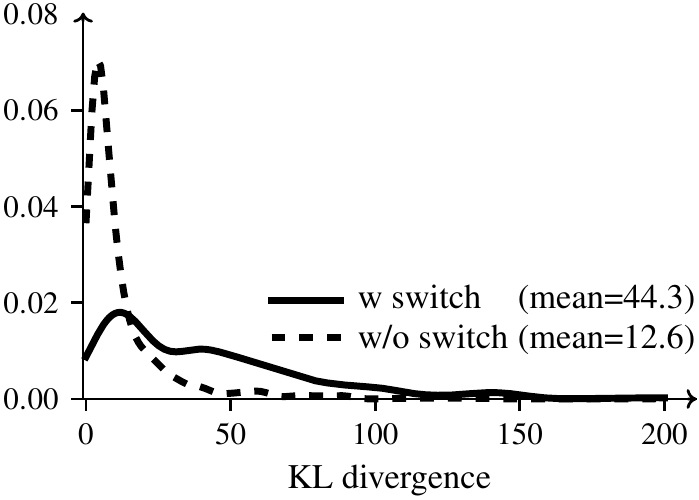}};
    \node[inner sep=0pt] (sphere) at (1.2,1.25)
      {\includegraphics[width=70pt]{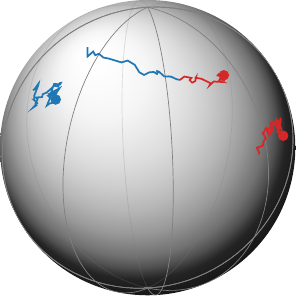}};
      \draw[color=gray] (2.4,-.1)--(-.1,-.1)--(-.1,2.4);
  \end{tikzpicture}
  \caption{Latent space paths with and without label switches (on $\mathbb{S}^2$) and distribution of KL divergences. \label{fig:human_activity}}
\end{wrapfigure}
A collection of \emph{three} latent paths is shown in \cref{fig:human_activity}. 
Each path is drawn from a posterior distribution computed from a different input time series at test time; the coloring visualizes the predicted class.
Clearly visible, the latent paths corresponding to input samples with constant labels remain within the vicinity of their initial state (marked \tikzcircle[tabred,fill=tabred]{2pt} and \tikzcircle[tabblue,fill=tabblue]{2pt}, depending on the label). This suggests that the SDE is primarily driven by its diffusion term. Conversely, the latent path corresponding to an input sample with a label switch clearly shows the influence of the drift component.
Our conjecture can be supported quantitatively by analyzing the path-wise KL divergences (which we measure relative to a \emph{driftless} prior). \cref{fig:human_activity} shows the distributions of the latter over input samples \emph{with} and \emph{without} label switches. We see that latent paths corresponding to input samples \emph{with} label switches exhibit higher KL divergences, \ie, they deviate more from the driftless prior.

\vspace{-0.2cm}
\paragraph{Pendulum regression.}
In this regression task \cite{Schirmer22a}, we are given (algorithmically generated  \cite{Becker19a}) time series of pendulum images at 50 irregularly spaced time points with the objective to predict the pendulum position, parameterized by the sine/cosine of the pendulum angle. The time series exhibit noise in the pixels and in the pendulum position in form of a deviation from the underlying physical model.
We parameterize the recognition network by type (A, \begin{tikzpicture}\draw[tabred,thick ] (0,0) circle(2.5pt);\end{tikzpicture}) from \cref{fig:recognition_network} using the same CNN as in \cite{Schirmer22a} and map latent states to (sine, cosine) tuples by the same decoder as in the latter work. 2,000 images are used for training, 1,000 images for testing and an additional 1,000 images for validation and hyperparameter selection.

\begin{wraptable}{r}{6.25cm}
  \centering
  \small
  \caption{Pendulum \emph{regression}.\label{table:pendulum_regression}}
  \vspace{-1ex}
  \begin{tabular}{lr}
    \toprule
    & \multicolumn{1}{r}{\textbf{MSE} $\left(\times 10^{-3}\right)$} \\
    \midrule
    $^\dagger$ODE-ODE \cite{Rubanova19a} &   15.40 $\pm$  2.85 \\
    $^\dagger$mTAND-Enc \cite{Shukla21a}    &   65.64 $\pm$  4.05 \\
    $^\dagger$CRU \cite{Schirmer22a}        &   4.63  $\pm$  1.07 \\
    $^\dagger$f-CRU \cite{Schirmer22a}      &   6.16  $\pm$  0.88 \\
    $^{\ddagger}$S5 \cite{Smith23a}         &   3.41  $\pm$  0.27 \\
    \midrule
    mTAND-ODE                               &   4.65  $\pm$  1.21  \\
    \textcolor{tabblue}{mTAND-Enc}          &   \textbf{3.20} $\pm$ \textbf{0.60} \\
    \textcolor{tabblue}{CRU}                &   3.74 $\pm$   0.27 \\
    \textcolor{tabblue}{f-CRU}              &   \textbf{3.25} $\pm$   \textbf{0.26} \\
    \midrule
    \textbf{Ours}                           &   3.84 $\pm$   0.35 \\
    \bottomrule
    \multicolumn{2}{r}{$\dagger$, $\ddagger$ indicate results from \cite{Schirmer22a} and \cite{Smith23a}, resp.}
     \end{tabular}
     \vspace{-0.3cm}
\end{wraptable}
\cref{table:pendulum_regression} lists the MSE computed on the (sine, cosine) encoding of the pendulum angle, averaged over all time points in the testing sequences. Notably, the recurrent architectures in \cref{table:pendulum_regression}, \ie, (f-)CRU and S5, can, by construction, condition on the data at observed time points and thus better ``pin down'' the temporal dynamic of the pendulum motion. Latent ODE/SDE models, on the other hand, need to integrate forward from one \emph{initial} state and decode the angle from potentially erroneous latent states. In light of this observation, it is encouraging that the proposed latent SDE model performs only slightly worse than models based upon recurrent units (S5 and (f-)CRU).
Nevertheless, \cref{table:pendulum_regression} also reveals that performance on this task largely hinges on the network variant that encodes the initial latent state: in particular, although ODE-ODE \cite{Rubanova19a} fails, switching the initial state encoding (to mTAND) yields substantially better results. Notably, our experiments\footnote{We set the mTAND encoder such that its GRU returns an output for each point in time. These latent states are then mapped to an angle prediction by the same decoder as in \cite{Schirmer22a}.} with mTAND-Enc \cite{Shukla21a} (which fails in the runs reported in \cite{Schirmer22a}), yields results on a par with the state-of-the-art.

\subsection{Interpolation}
\label{subsection:experiments:interpolation}

\paragraph{PhysioNet (2012).}
We consider the \emph{interpolation task} of
\cite{Shukla21a} with \emph{subsampled} time points.
The dataset \cite{Silva12a} contains 8,000 time series of 41 patient variables, collected over a time span of 48 hours. Each variable is observed at irregularly spaced time points.
In the considered experimental setup of \cite{Shukla21a}, interpolation is understood as observing a fraction ($p$) of the available time points per sequence and interpolating observations at the remaining fraction ($1-p$) of time points.
Importantly, this is more challenging than the setting in \cite{Schirmer22a} or \cite{Rubanova19a}, as the experimental setup in the latter works corresponds to $p=1$. Furthermore, the pre-processing (see \cref{appendix:datasets} for details) of \cite{Shukla21a} renders results for $p=1$ not \emph{directly} comparable to the ones presented in \cite{Schirmer22a}.
As customary in the literature \cite{Schirmer22a,Shukla21a,Horn20a}, we use 80\% of all time series for training, 20\% for testing, scale all variables to a range of $[0,1]$, and map the full time span of 48 hours to the time interval $[0,1]$ at a quantization level of either 1 \cite{Rubanova19a,Shukla21a,Horn20a} or 6 minutes \cite{Schirmer22a}.
As in the experiments of \cref{subsection:pertp_regression_classification}, the recognition model is parameterized by
type (B, \begin{tikzpicture}\draw[tabblue,thick ] (0,0) circle(2.5pt);\end{tikzpicture}) from \cref{fig:recognition_network} and latent states are decoded to observations as in \cite{Shukla21a}.

\cref{table:physionet_interpolation} lists the MSE obtained on varying fractions ($p$) of time points available to the recognition model.
In the situation of \emph{no subsampling}, \ie, $p=1$, the f-CRU and mTAND-Full models perform substantially better than ours. However, in that case, the task effectively boils down to reconstructing the observed input sequences.
In contrast, \emph{subsampling}, \ie, $p<1$, truly allows assessing the \emph{interpolation capability} on previously unseen data. In this situation, we achieve a new state-of-the art as our method improves upon its underlying mTAND module.

\begin{table}
  \small
\centering
\caption{\label{table:physionet_interpolation} Results on the PhysioNet (2012) interpolation task
from \cite{Shukla21a} for two quantization levels (\textbf{Top}: 6 min; \textbf{Bottom}: 1 min). Percentages
indicate the fraction of \emph{observed time points}. We report the MSE ($\times 10^{-3}$) on
the observations at \emph{unseen} time points, averaged across all testing sequences. Results for
mTAND-ODE at 1 min quantization (bottom part) are \emph{not} listed, as training time exceeded our 24-hour threshold.}
\vskip1ex
\setlength{\tabcolsep}{5pt}
\begin{tabular}{lcccccc}
  \toprule
  Observed \% $\rightarrow$ & 50\%   & 60\% & 70\% & 80\% & 90\% &100\%\\
  \midrule
  CRU \cite{Schirmer22a} & 5.11 $\pm$ 0.40
                         & 4.81 $\pm$ 0.07
                         & 5.60 $\pm$ 0.70
                         & 5.54 $\pm$ 0.61
                         & 5.85 $\pm$ 0.82
                         & 1.77 $\pm$ 0.92\\
  f-CRU \cite{Schirmer22a} & 5.24  $\pm$ 0.49
                           & 4.96 $\pm$ 0.36
                           & 4.85 $\pm$ 0.10
                           & 5.66 $\pm$ 0.71
                           & 6.11 $\pm$ 0.82
                           & 1.10 $\pm$ 0.95\\
  mTAND-Full \cite{Shukla21a}  & 3.61 $\pm$ 0.08
                              & 3.53 $\pm$ 0.06
                              & 3.48 $\pm$ 0.08
                              & 3.55 $\pm$ 0.12
                              & 3.58 $\pm$ 0.09
                              & \textbf{0.49} $\pm$ \textbf{0.03}\\
  mTAND-ODE                   & 3.38 $\pm$ 0.03
                              & 3.33 $\pm$ 0.02
                              & 3.32 $\pm$ 0.03
                              & 3.33 $\pm$ 0.04
                              & 3.35 $\pm$ 0.02
                              & 1.82 $\pm$ 0.04\\
\midrule
                              \textbf{Ours}
                              & \textbf{3.14} $\pm$ \textbf{0.03}
                              & \textbf{3.06} $\pm$ \textbf{0.05}
                              & \textbf{3.01} $\pm$ \textbf{0.05}
                              & \textbf{2.94} $\pm$ \textbf{0.07}
                              & \textbf{3.00} $\pm$ \textbf{0.08}
                              & 1.55 $\pm$ 0.01\\
  \midrule
  \midrule
  CRU \cite{Schirmer22a} & 5.20 $\pm$ 0.44
                         & 5.25 $\pm$ 0.55
                         & 5.22 $\pm$ 0.60
                         & 5.23 $\pm$ 0.59
                         & 5.59 $\pm$ 0.78
                         & 1.82 $\pm$ 0.97\\
  f-CRU \cite{Schirmer22a} & 5.53 $\pm$ 0.54
                           & 5.22 $\pm$ 0.58
                           & 5.41 $\pm$ 0.65
                           & 5.25 $\pm$ 0.60
                           & 5.80 $\pm$ 0.76
                           & 0.84 $\pm$ 0.16\\
 $^\dagger$mTAND-Full \cite{Shukla21a}
                              & 4.19 $\pm$ 0.03
                              & 4.02 $\pm$ 0.05
                              & 4.16 $\pm$ 0.05
                              & 4.41 $\pm$ 0.15
                              & 4.80 $\pm$ 0.04
                              & \textbf{0.47} $\pm$ \textbf{0.01}\\
  \midrule
  \textbf{Ours}               & \textbf{3.14} $\pm$  \textbf{0.04}
                              & \textbf{3.03} $\pm$  \textbf{0.03}
                              & \textbf{3.01} $\pm$  \textbf{0.02}
                              & \textbf{2.99} $\pm$  \textbf{0.10}
                              & \textbf{2.96} $\pm$  \textbf{0.04}
                              & 1.53 $\pm$ 0.01\\
  \bottomrule
  \multicolumn{7}{r}{$\dagger$ indicates results from \cite{Shukla21a} (which only lists the 1-min setting).}
\end{tabular}
\vspace{-.5cm}
\end{table}

\vspace{-.2cm}
\paragraph{Pendulum interpolation.}
\begin{wrapfigure}[14]{r}[-2.2cm]{5.2cm}
  \centering
  \vspace{-0.60cm}
  \begin{tikzpicture}
    \setlength{\tabcolsep}{2pt}
    \node (t) [font=\small] {
      \begin{tabular}{lr}
        \toprule
        & \multicolumn{1}{c}{\textbf{MSE}  $\left(\times 10^{-3}\right)$} \\
        \midrule
        $^\dagger$mTAND-Full \cite{Shukla21a} & 15.40 $\pm$ 0.07 \\
        $^\dagger$ODE-ODE \cite{Rubanova19a} & 15.06 $\pm$ 0.14 \\
        $^\dagger$f-CRU \cite{Schirmer22a}     & 1.39  $\pm$ 0.07 \\
        $^\dagger$CRU \cite{Schirmer22a}        & \textbf{1.00}  $\pm$ \textbf{0.07} \\
        mTAND-ODE                       & 8.23$\pm$ 0.10\\
        \midrule
        \textcolor{tabblue}{f-CRU}  & 2.61 $\pm$ 0.19 \\
        \textcolor{tabblue}{CRU}       & 2.06 $\pm$ 0.05 \\
        \midrule
        \textbf{Ours}        & 8.15 $\pm$ 0.06 \\
        \bottomrule
        \multicolumn{2}{r}{$\dagger$ indicates results from \cite{Schirmer22a}.}
      \end{tabular}
    };
    \node (f) at (3.75,0.25)  {
      \includegraphics[scale=1.1]{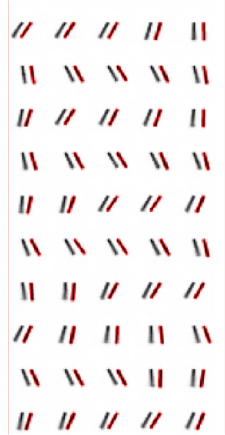}
    };
  \node[inner sep=0pt,above=1pt of t,text width=\linewidth]
  {{\captionof{table}{Pendulum \emph{interpolation}. \label{tab:pend_int}}}};
  \end{tikzpicture}
  \end{wrapfigure}
We evaluate on the same pendulum interpolation task as in \cite{Schirmer22a}.
This task differs from the position prediction task of \cref{subsection:pertp_regression_classification} in that we observe only around half of 50 randomly chosen, but uncorrupted, time points per sequence. Interpolation is understood as predicting the pendulum images at the unseen time points.
As remarked earlier, this is notably more challenging for latent ODE/SDE type approaches than for models that rely on recurrent units, as we cannot pin down the latent path at the observed measurements other than through the loss.
In terms of recognition network architecture, we use type (A, \begin{tikzpicture}\draw[tabred,thick ] (0,0) circle(2.5pt);\end{tikzpicture}) from \cref{fig:recognition_network} and decode latent states via the convolutional decoder of \cite{Schirmer22a}. Hyperparameters are selected based on the validation set.

As can be seen from \cref{tab:pend_int}, which lists the MSE across all images in the testing sequences, approaches based on recurrent units perform exceptionally well on this task, while the latent ODE approach, as well as mTAND-Full perform mediocre at best. Our approach yields decent results, but visual inspection of the reconstructed sequences (see visualization next to \cref{tab:pend_int}) shows slight phase differences, which underscores our remark on the difficulty of learning this fine-grained dynamic by integrating forward from $t=0$ and guiding the learner only through the loss at the available time points.

\vspace{-0.1cm}
\paragraph{Rotating MNIST.}
\begin{wraptable}[11]{r}{5cm}
  \vspace{-0.45cm}
  \small
  \centering
  \caption{\label{table:rotmnist} Rotating MNIST.}
  \centering{
  \begin{tabular}{lc}
      \toprule
      & \textbf{MSE} $\left(\times 10^{-3}\right)$ \\
      \midrule
      $^\dagger$GPPVAE-dis                & 30.9 $\pm$ 0.02 \\
      $^\dagger$GPPVAE-joint              & 28.8 $\pm$ 0.05 \\
      $^\dagger$ODE$^2$VAE                & 19.4 $\pm$ 0.06 \\
      $^\dagger$ODE$^2$VAE-KL             & 18.8 $\pm$ 0.31 \\
      CNN-ODE                           & 14.5 $\pm$ 0.73 \\
      \midrule
      \textbf{Ours}                       & \textbf{11.5} $\pm$ \textbf{0.38} \\
      \bottomrule
      \multicolumn{2}{r}{$\dagger$ indicates results from \cite{Yildiz19a}.} \\
    \end{tabular}}
  \end{wraptable}
  This dataset \cite{Casale18a} consists of sequences of $28 \times 28$ grayscale images of gradually clockwise rotated versions of the handwritten digit ``3''. One complete rotation takes place over 16 evenly spaced time points.
  This gives a total sequence length of 16, where we interpret each partial rotation as a time step of $1/16$ in the time interval $[0,1]$.
  During training, four images in each sequence are dropped randomly, and one rotation angle (\ie, the testing time point) is consistently left out. Upon receiving the first image at time $t=0$, the task is to interpolate the image at the left-out time point $t={3}/{16}$. 
  
  We follow the evaluation protocol of \cite{Yildiz19a}, where 360 images are used for training, 360 for testing, and 36 for validation and hyperparameter selection. The recognition model is of type (C, \begin{tikzpicture}\draw[tabgreen,thick ] (0,0) circle(2.5pt);\end{tikzpicture}) from \cref{fig:recognition_network} and uses the same convolutional encoder (CNN) as in \cite{Yildiz19a}, as well as the same convolutional decoder from latent states to prediction images.
  Table \ref{table:rotmnist} reports the MSE at the testing time point, averaged across all testing sequences. Our model achieves the lowest MSE on this benchmark. 
  \cref{fig:rotmnist_extrapolation} additionally highlights the extrapolation
  quality of the model on the example of integrating the SDE learned for the interpolation task forward from $t=0$ to $t=4$ in steps of $\nicefrac{1}{16}$.

Notably, this is the only experiment where we do not use $K=6$ Chebyshev polynomials for the time evolution of the drift, see Eq.~\eqref{eq:chebypoly}, but only $K=1$, as we want to assess whether a constant velocity on the sphere is a suitable model for the constant rotation velocity in the data. \cref{fig:rotmnist_extrapolation} confirms this intuition. In fact, using $K>1$ polynomials yields better interpolation results but at a loss of extrapolation quality as the higher-order polynomials are less well-behaved for large $t$. Regarding the baselines, we do not compare against (f-)CRU and mTAND as they are not directly applicable here. As only the image at the \emph{initial} time point is used as model input, learning any reasonable attention mechanism (mTAND) or conditioning on observations spread out over the observation time interval (CRU) is impossible.

\begin{figure}
  \centering{
  \includegraphics[width=1.0\textwidth]{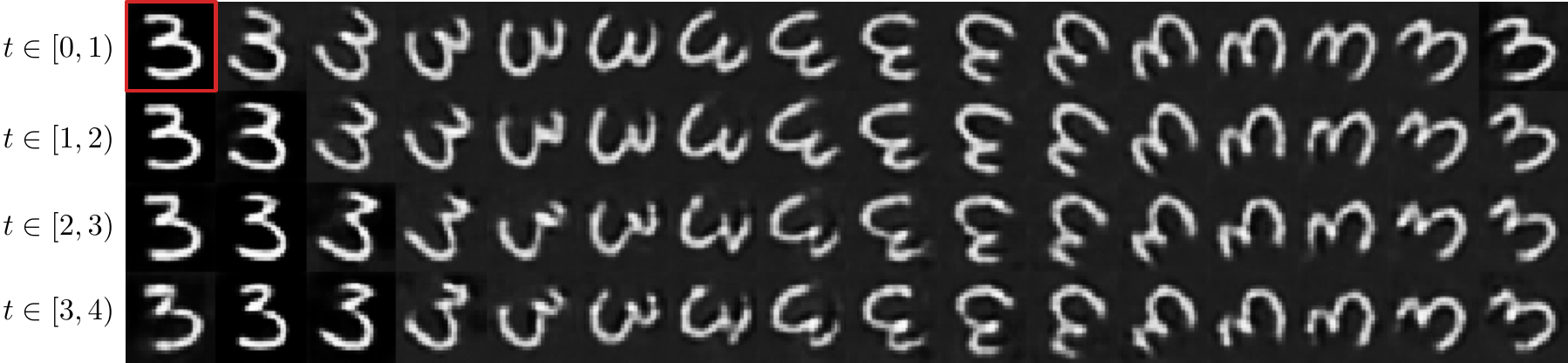}}
  \caption{Exemplary reconstructions on the Rotating MNIST data. Shown are the results (on one testing sequence) by integrating forward from $t=0$ (marked \textcolor{tabred}{red}) to $t=4$, \ie, three times longer than what is observed ($t \in [0,1)$) during training.\label{fig:rotmnist_extrapolation}}
\end{figure}

\paragraph{Runtime measurements.} For a detailed runtime analysis across different approaches
that model an underlying (stochastic) differential equation, we refer the reader to \cref{appendix:subsection:runtime_measurements}.

\section{Discussion \& Limitations}
\label{section:discussion}
We have presented an approach to learning (neural) latent SDEs on homogeneous spaces with a Bayesian learning regime. Specifically, we focused on one incarnation of this construction, \ie, the unit $n$-sphere,
due to its abundance in many learning problems. Our SDEs are, by design,
less expressive than in earlier works. Yet, when modeling a latent
variable, this limitation appears to be of minor importance (as shown in our experiments). We do, however, gain in terms of overall model complexity, as backpropagating through the SDE solver can be done efficiently (in a ``discretize-then-optimize'' manner) without
having to resort to adjoint sensitivity methods or variable step-size solvers. While our approach could benefit from the latter, we leave this for future research. Overall, the class of neural SDEs considered in this work presents a viable alternative to existing neural SDE approaches that seek to learn (almost) arbitrary SDEs by fully parameterizing the vector fields of an SDEs drift and diffusion coefficient by neural networks. In our case, we do use neural networks as high-capacity function approximators, but in a setting where one prescribes the space on which
the SDE evolves. Finally, whether it is reasonable to constrain the class of
SDEs depends on the application type. In situations where one seeks to learn a model in observation space directly, approaches that allow more model flexibility may be preferable.
 
\subsection*{Acknowledgments}
This work was supported by the Austrian Science Fund (FWF) under project P31799-N38 and the Land Salzburg within the EXDIGIT project 20204-WISS/263/6-6022 and projects 0102-F1901166-KZP, 20204-WISS/225/197-2019.

\vskip3ex
\begin{center}
\begin{tikzpicture}
    \node[draw=none, rounded corners, fill=black!07!white, inner sep=5pt] at (0,0) {\textbf{Source code} is available at \url{https://github.com/plus-rkwitt/LatentSDEonHS}.};
\end{tikzpicture}
\end{center}
\vspace{-1ex}

\printbibliography[segment=1]
\end{refsegment}

\clearpage
\appendix

\hrule height 4pt
\vskip 0.25in
\vskip -\parskip%
{\centering
{\LARGE\bf Supplementary material \par}}
\vskip 0.29in
\vskip -\parskip
\hrule height 1 pt
\vskip 0.09in%

\textcolor{black}{\emph{In this supplementary material, we report full dataset 
descriptions, list architecture details, present additional experiments, 
and report any left-out derivations.}}
\vspace{-0.5cm}
\doparttoc
\faketableofcontents
\renewcommand{\thepart}{\texorpdfstring{}{Supplementary Material}}
\renewcommand{\partname}{}
\renewcommand\ptctitle{Contents}
\part{}
\vspace{-0.5cm}
{
  \hypersetup{linkcolor=black}
  \parttoc
}

\begin{refsegment}

\paragraph{Societal impact.} This work mainly revolves around a novel type of neural SDE model. We envision no direct negative societal impact but remark
that any application of such a model, \eg, in the context of highly
sensitive personal data (such as ICU stays in PhysioNet (2012)), should be inspected carefully for potential biases. This is particularly relevant if missing measurements are interpolated and eventually used in support of treatment decisions. In this scenario, biases (most likely due to the training data) could disproportionately affect any minority group, for instance, as imputation results might be insufficiently accurate.

\section{Datasets}
\label{appendix:datasets}
This section contains a description of each dataset we use in our experiments. While all datasets have been used extensively in the literature, reported results are often not directly comparable due to inconsistent preprocessing steps and/or subtle differences in terms of data partitioning as well as variable selection (\eg, for PhysioNet (2012)). Hence, we find it essential to highlight the exact data-loading and preprocessing implementations we use: for \textbf{Human Activity} and \textbf{PhysioNet (2012)} experiments, we rely on publicly available code from \cite{Shukla21a} (which is based upon the reference implementation of \cite{Rubanova19a}). For the \textbf{Pendulum} experiments, we rely on the reference implementation from \cite{Schirmer22a}, and for \textbf{Rotating MNIST}, we rely on the reference implementation of \cite{Yildiz19a}.

\subsection{Human Activity}
\label{appendix:datasets:humanactivity}

This motion capture dataset\footnote{\url{https://doi.org/10.24432/C57G8X}}, used in \cref{subsection:pertp_regression_classification}, consists of time series collected from five individuals performing different activities. The time series are 12-dimensional, corresponding to 3D coordinates captured by four motion sensors. Following the preprocessing regime of \cite{Rubanova19a} (who group multiple motions into one of \emph{seven} activities), we have 6,554 sequences available with 211 unique time points overall and a fixed sequence length of 50 time points. The time range of each sequence is mapped to the time interval $[0,1]$, and no normalization of the coordinates is performed. The task is to assign each time point to one of the seven (motion/activity) categories (\ie, ``lying'', ``lying down'', ``sitting'', ``sitting down'', ``standing up from lying'', ``standing up from sitting'', ``standing up from sitting on the ground''). The dataset is split into 4,194 sequences for training, 1,311 for testing, and 1,049 for validation. Notably, the majority of time series have \emph{one} constant label, with only 239 (18.2\%) of the time series in the testing portion of the dataset containing actual label switches (\eg, from ``lying'' to ``standing up from lying'').

\subsection{Pendulum}
\label{appendix:datasets:pendulum}

The pendulum dataset, used in Sections \ref{subsection:pertp_regression_classification} and \ref{subsection:experiments:interpolation}, contains algorithmically generated  \cite{Becker19a} (grayscale) pendulum images of size $24 \times 24$. For the \textbf{regression} task from \cite{Schirmer22a}, we are given time series of these pendulum images at 50 irregularly spaced time points with the objective to predict the pendulum position, parameterized by the (sine, cosine) of the pendulum angle. The time series exhibit noise in the pixels and in the pendulum position (in form of a deviation from the underlying physical model). For the \textbf{interpolation} task from \cite{Schirmer22a}, uncorrupted images are available, but we only observe a smaller fraction of the full time series (50\% of time points randomly sampled per time series). The task is to interpolate images at the unobserved time points. Two exemplary time series (per task) from the training portion of the dataset are shown in \cref{fig:appendix:pendulumdata}. For both tasks, we use 4,000 images for training, 1,000 images for testing
and 1,000 images for validation.

\begin{figure}[h!]
\centering{
\includegraphics[width=0.48\textwidth]{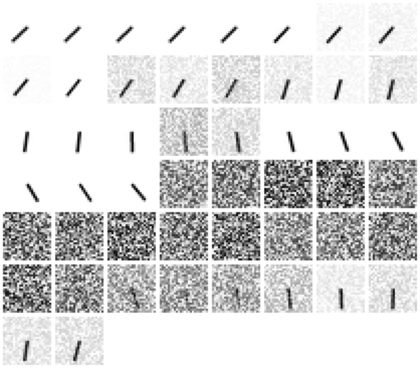}
\hfill
\includegraphics[width=0.48\textwidth]{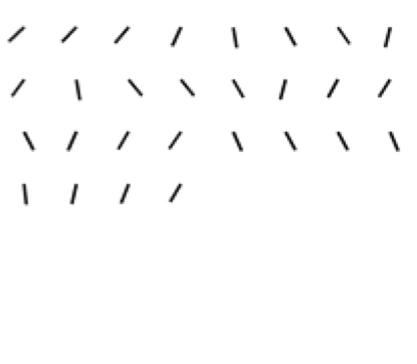}}
\caption{Illustration of \emph{one} exemplary training time series from the (\emph{left}) \textbf{regression} and (\emph{right}) pendulum position \textbf{interpolation} task, respectively, from \cite{Schirmer22a}. Note that the images
shown here are inverted for better visualization. \label{fig:appendix:pendulumdata}}
\end{figure}

\subsection{PhysioNet (2012)}
\label{appendix:datasets:physionet}

The PhysioNet (2012) \cite{Silva12a} dataset has been used many times in the literature.
However, a clean comparison is difficult as the preprocessing steps, as well as the split and variable selection, often differ in subtle details.
Hence, we focus on one evaluation setup \cite{Shukla21a} and re-evaluate prior work precisely in this setting instead of adopting the reported performance measures.

The original challenge dataset contains time series for 12,000 ICU stays over a time span of 48 hours. 41 variables are available, out of which a small set is only measured at admission time (\eg, age, gender), and up to 37 variables are sparsely available within the 48-hour window. To the best of our knowledge, almost all prior works normalize each variable to $[0,1]$ (\eg, \cite{Rubanova19a,Schirmer22a,Shukla21a} and many more). In \cite{Shukla21a}, and hence in our work as well, \emph{all} 41 variables are used, and only the training portion of the full challenge dataset is considered. The latter consists of 8,000 time series, split into 80\% training sequences and 20\% testing sequences. Depending on the experimental regime in prior works, observations are either aggregated within \textbf{1 min} \cite{Rubanova19a,Shukla21a} or within \textbf{6 min} \cite{Schirmer22a} time slots. This yields 480 and 2,880, resp., possible (unique) time points. Overall, the available measurements in this dataset are very sparse, \ie, only $\approx$ 2\% of all potential measurements are available.

While prior work typically focuses on \emph{mortality prediction}, \ie, predicting one binary outcome per time series, we are interested in interpolating observations at unseen time points. To assess this \emph{interpolation capability} of a model, we adopt the \emph{interpolation task} of \cite{Shukla21a}. In this task, interpolation is understood as observing a fraction ($p$) of the (already few) available time points per sequence and interpolating observations at the remaining fraction ($1-p$) of time points. From our perspective, this is considerably more challenging than the setting in \cite{Schirmer22a} or \cite{Rubanova19a}, as the experimental setup in the two latter works corresponds to $p=1$.

Furthermore, the preprocessing regime in \cite{Shukla21a} slightly differs from \cite{Schirmer22a} in two aspects: (1) in the way each variable is normalized and (2) in the number of variables used (41 in \cite{Shukla21a} and 37 in \cite{Schirmer22a}). Regarding the normalization step: letting $x_1,\ldots,x_N$ denote all values of one variable in the training dataset, and $x_{\text{min}}$, $x_{\text{max}}$ denote the corresponding minimum and maximum values of that variable, \cite{Shukla21a} normalizes by $(x_i - x_{\text{min}})/x_{\text{max}}$, whereas \cite{Schirmer22a} normalizes by $(x_i - x_{\text{min}})/(x_{\text{max}} - x_{\text{min}})$. For fair comparison, we re-ran all experiments with the preprocessing of \cite{Shukla21a}.

\subsection{Rotating MNIST}
\label{appendix:datasets:rotatingmnist}

This dataset \cite{Casale18a} consists of sequences of $28 \times 28$ grayscale images of gradually clockwise rotated versions of the handwritten digit ``3''. One complete rotation takes place over 16 evenly spaced time points. During training, one time point is consistently left-out (according to the reference implementation of \cite{Yildiz19a}, this is the 4th time point, \ie\ at time $t=3/16$) and four additional time points are dropped randomly per sequence. Upon receiving the first image at time $t=0$, the task is to interpolate the image at the left-out (4th) time point. Exemplary training sequences are shown in \cref{fig:appendix:rotmnisttrain}.
We exactly follow the dataset splitting protocol of \cite{Yildiz19a}, where 360 images are used for training, 360 for testing and 36 for validation and hyperparameter selection.

\begin{figure}[h!]
\centering
\includegraphics[width=0.99\textwidth]{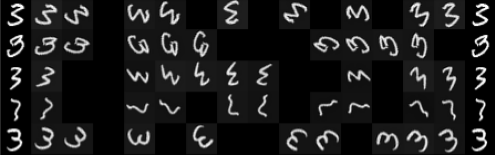}
\caption{\label{fig:appendix:rotmnisttrain} Five (counted in rows) exemplary training sequences
from the rotating MNIST dataset.}
\end{figure} 

\section{Architecture details}
\label{appendix:architecture_details}
The three types of recognition networks (\ie,  A, B, or C) we use throughout all experiments are shown in \cref{fig:recognition_network}. In the following, we specify the configuration of the networks (from \cref{fig:recognition_network}) per experiment, discuss the choice of loss function (in addition to the KL divergence terms from the ELBO), and provide details about the architecture(s) in works we compare to.

Irrespective of the particular type of network that processes an input sequence, our parameterization of the SDE on the $n$-sphere remains the same across all experiments. 
In particular, given a representation of an input sequence, say $\mathbf{h}$, we linearly map to (1) the parameters of the power spherical distribution, i.e., the posterior distribution on the initial state, and (2) to the coefficients of $K$ Chebyshev polynomials, cf. Eq.~\eqref{eq:chebypoly}, that control the time evolution of $\mathbf{K}^{\boldsymbol{\phi}}$. Here, the superscript $\boldsymbol{\phi}$ denotes the dependency of $\mathbf{K}$ on the (parameters of the) recognition network. 
As $\mathbf{K}^{\boldsymbol{\phi}}$ is skew-symmetric and has, in case of $\SO{n}$, $n(n-1)/2$ free parameters, we linearly map to $Kn(n-1)/2$ coefficients overall. Unless noted otherwise, we consistently use $K=6$ polynomials and set $n=16$. In all experiments with ODE/SDE type models, we use ADAM \cite{Kingma15a} for optimization, with initial learning rate set to 1e-3 and subject to a cyclic cosine learning rate schedule (cycle length of 60 epochs) \citearefs{Fu19}. 
Finally, when replacing our SDE on the $n$-sphere by an ODE in $\mathbb{R}^n$ (denoted as mTAND-ODE), we use an Euler scheme as ODE solver and compute gradients via the adjoint sensitivity method (implemented in the publicly available 
\href{https://github.com/rtqichen/torchdiffeq}{torchdiffeq} package).

\subsection{Human Activity}
\label{subsection:appendix:humanactivity:architecture}

As recognition network we use type (B, \begin{tikzpicture}\draw[tabblue,thick ] (0,0) circle(2.5pt);\end{tikzpicture}) from \cref{fig:recognition_network} with an mTAND encoder (mTAND-Enc \cite{Shukla21a}) configured as in the publicly-available reference implementation\footnote{\url{https://github.com/reml-lab/mTAN}}. When used as a standalone module (\ie, without any subsequent ODE/SDE), mTAND-Enc uses the \emph{outputs} of its final GRU unit (which yields representations in $\mathbb{R}^{128}$ at all desired time points) to linearly map to class predictions. In our setting, we take the \emph{last hidden state} of this GRU unit as input $\mathbf h\in \mathbb R^{128}$ to the linear maps mentioned above. When replacing our SDE on the $n$-sphere by an ODE in $\mathbb{R}^n$ (denoted as mTAND-ODE), the last GRU state is linearly mapped to the location parameter of a Gaussian distribution and to the entries of a diagonal covariance matrix. For mTAND-ODE, the ODE is parameterized as in \cite{Rubanova19a} by a 3-layer MLP (with ELU activations) with 32 hidden units, but the dimensionality of the latent space is increased to 16 (for a fair comparison). As in our approach, latent states are linearly mapped to class predictions. Notably, when comparing ODE-ODE \cite{Rubanova19a} to mTAND-ODE, the only conceptual difference is in the choice of recognition network (\ie, an ODE in \cite{Rubanova19a}, \emph{vs.} an mTAND encoder here).

\paragraph{Loss components.} In addition to the KL divergence terms in the ELBO, the
\emph{cross-entropy loss} is computed on class predictions at the available time points. For our approach, as well as mTAND-ODE, the selected weighting of the KL divergence  terms is based on the validation set accuracy (and no KL annealing is performed).

\subsection{Pendulum regression \& interpolation}
\label{subsection:appendix:pendulumregression:architecture}

In the \emph{pendulum} experiments of \cref{subsection:pertp_regression_classification}
and \cref{subsection:experiments:interpolation}, we parameterize the recognition network by type (A, \begin{tikzpicture}\draw[tabred,thick ] (0,0) circle(2.5pt);\end{tikzpicture}) from \cref{fig:recognition_network}. Images are first fed through a convolutional neural network (CNN) with the same architecture as in \cite{Schirmer22a} and the so obtained representations are then input to an mTAND encoder (mTAND-Enc). Different to the mTAND configuration in \cref{subsection:appendix:humanactivity:architecture}, the final GRU unit is set to output representations $\mathbf h \in \mathbb{R}^{32}$. For the \emph{regression task},
latent states are decoded via the same two-layer MLP from \cite{Schirmer22a}. For \emph{interpolation}, we also use the convolutional decoder from \cite{Schirmer22a}.

\paragraph{Loss components.} For the \emph{regression task}, we compute the mean-squared-error (MSE) on the (sine, cosine) encodings of the pendulum angle \emph{and} use a Gaussian log-likelihood on the image reconstructions. Although the images for the regression task are corrupted (by noise), we found that adding a reconstruction loss helped against overfitting. The weighting of the KL divergence terms in the ELBO is selected on the validation set (and no KL annealing is performed). For the \emph{interpolation task}, the MSE term
is obviously not present in the overall optimization objective.

\paragraph{Remark on mTAND-Enc.} While the mTAND-Enc approach from \cite{Shukla21a} fails in the experimental runs reported in \cite{Schirmer22a} (see $^\dagger$mTAND-Enc in \cref{table:pendulum_regression}), we highlight that simply using the outputs of the final GRU unit as input to the two-layer MLP that predicts the (sine, cosine) tuples worked remarkably well in our experiments.

\subsection{PhysioNet (2012)}
\label{subsection:appendix:physionet:architecture}

In the PhysioNet (2012) interpolation experiment, we parameterize our recognition
network by type (B, \begin{tikzpicture}\draw[tabblue,thick ] (0,0) circle(2.5pt);\end{tikzpicture}) from \cref{fig:recognition_network} using representations $\mathbf h \in \mathbb R^32$ as input to the linear layers implementing the parameters of the drift and the initial distribution. The mTAND encoder
is configured as suggested in the mTAND reference implementation for this
experiment. We further adjusted the reference implementation\footnote{\url{https://github.com/boschresearch/Continuous-Recurrent-Units}} of (f-)CRU from
\cite{Schirmer22a} to use exactly the same data preprocessing pipeline as
in \cite{Shukla21a} and configure (f-)CRU as suggested by the authors.

\paragraph{Loss components.} In addition to the KL divergence terms, the
ELBO includes the Gaussian log-likelihood of reconstructed observations at the available time points (using a fixed standard deviation of 0.01). The weightings of the KL divergence terms is fixed to 1e-5, \emph{without} any further tuning on a separate 
validation portion of the dataset (and no KL annealing is performed).

\subsection{Rotating MNIST}
\label{subsection:appendix:rotatingmnist:archicture}

In the rotating MNIST experiment, the recognition network is parameterized by type (C, \begin{tikzpicture}\draw[tabgreen, thick ] (0,0) circle(2.5pt);\end{tikzpicture}) from
\cref{fig:recognition_network}. In particular, we rely on the convolutional encoder (and decoder) from \cite{Yildiz19a}, except that we only use the \emph{first} image as input (whereas
\cite{Yildiz19a} also need the two subsequent frames for their momentum encoder). Unlike the other experiments, where the Chebyshev polynomials are evaluated on the time interval $[0,1]$, we deliberately set this range to $[0,20]$ in this experiment. This is motivated by the fact that we also seek to extrapolate beyond the time range observed during training, see \cref{fig:rotmnist_extrapolation}.
Furthermore, we use only one polynomial ($K=1$), which is reasonable due to the underlying ``simple'' dynamic of a constant-speed rotation; setting $K>1$ only marginally changes the results listed in \cref{table:rotmnist}.

\paragraph{Loss components.} Gaussian log-likelihood is used on image reconstructions and weights for the KL divergences are selected based on the validation set (and no KL annealing is performed).

\subsection{Computing infrastructure}
\label{subsection:appendix:computinginfrastructure}

All experiments were executed on systems running Ubuntu 22.04.2 LTS
(kernel version \texttt{5.15.0-71-generic x86\_64}) with 128 GB
of main memory and equipped with either NVIDIA GeForce RTX 2080 Ti
or GeForce RTX 3090 Ti GPUs. All experiments are implemented in
PyTorch (v1.13 and also tested on v2.0). In this hardware/system configuration, all runs reported in the tables of the main manuscript terminated within a 24-hour time window.

\section{Additional empirical results}
\label{appendix:add_results}
\subsection{Runtime measurements}
\label{appendix:subsection:runtime_measurements}

We compare the runtime of our method to two close alternatives, where we substitute our latent
(neural) SDE on the unit $n$-sphere, by either a latent (neural) ODE \cite{Chen18a,Rubanova19a} 
\emph{or} a latent (neural) SDE \cite{Li20a} in $\mathbb{R}^n$. In other words, we exchange the 
model components for the latent dynamics in the approaches we use for Rotating MNIST and PhysioNet (2012), 
see \cref{section:experiments}. This provides insights into runtime behavior as a function of the integration steps, moving from 16 steps on Rotating MNIST to 480 on PhysioNet (2012) at a 6 min quantization level and to 2,880 steps on PhysioNet (2012) at a 1 min quantization level. In all experiments, we exclusively use Euler's method as ODE/SDE solver with a fixed step size. Results are summarized in \cref{table:runtime_sde}. In particular, we list the elapsed real time for a \emph{forward+backward} pass (batch size 50), averaged over 1,000 iterations.
  
\begin{table}[H]
  \centering
  \begin{small}
    \caption{Comparison of the elapsed real time of a \emph{forward+backward} pass over a batch (of size 50) across different tasks/datasets (with varying number of integration steps). For each task, the overall model sizes (in terms of the number of parameters) are comparable. Listed values are averages over 1,000 iterations $\pm$ the corresponding standard deviations. We remark that the runtime of our approach when using $(\mathbb{R}^n, \mathrm{GL}(n))$ instead 
    of $(\mathbb{S}^{n-1},\mathrm{SO}(n))$ differs only marginally. \label{table:runtime_sde}}
    \vskip1.5ex
    \centering 
  \begin{tabular}{lccc}
    \toprule
     & \textbf{Rotating MNIST} & \textbf{PhysioNet (2012)} &  \textbf{PhysioNet (2012)}\\
     & & \textbf{(6 min)} & \textbf{(1 min)} \\
    \midrule 
    Number of integration steps & 16 & 480 & 2,880 \\
    \midrule
    \textcolor{black!50}{CNN/mTAND-}ODE (\cite{Chen18a,Rubanova19a})      &  0.053 $\pm$ 0.004  & 0.926 $\pm$ 0.017 & 3.701 $\pm$ 0.033\\
    \textcolor{black!50}{CNN/mTAND-}SDE (\cite{Li20a})            &  0.112 $\pm$ 0.009  & 2.075 $\pm$ 0.042 & 10.273 $\pm$ 0.173\\
    \midrule
    \textbf{Ours} -- $(\mathbb{S}^{n-1},\mathrm{SO}(n))$  &  0.055 $\pm$ 0.005 & 0.179 $\pm$ 0.008 & 2.693 $\pm$ 0.027 \\
    \bottomrule
   \end{tabular}
  \end{small}
\end{table} 

Overall, although our approach implements a latent SDE, runtime per batch is on a par (or better) with the latent ODE variant on rotating MNIST and on PhysioNet (2012) at both quantization levels.
Furthermore, we observe a substantially lower runtime of our approach compared to the (more flexible) latent SDE work of \cite{Li20a}, which is closest in terms of modeling choice. It is, however, 
important to point out that the way our SDE evolves on the sphere (\ie, as a consequence of the 
group action) lends itself to a quite efficient implementation where large parts of the 
computation can be carried out as a single tensor operation (see \cref{section:appendix:implementation}).

\begin{figure}[H]
  \centering
  \includegraphics[width=1.\textwidth]{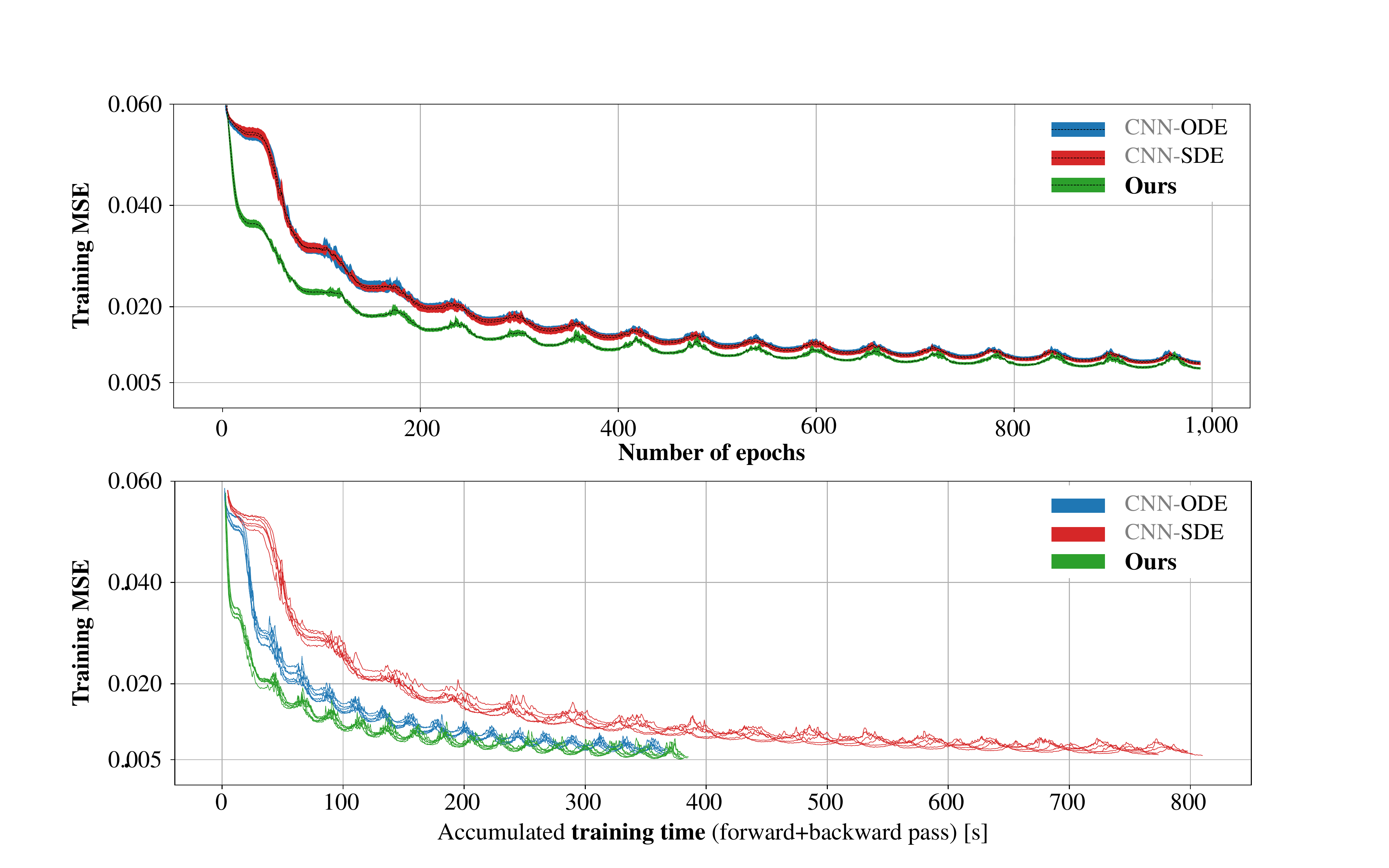} 
  \vskip1ex
  \caption{Training time comparison on Rotating MNIST for \emph{five} different training runs per model. \textbf{Bottom:} Training MSE vs. (accumulated) training time of each model. \textbf{Top:} Training MSE (dashed) and standard deviations (shaded) \emph{vs.} number of training epochs.}
  \label{fig:runtime}
  \vspace{-0.3cm}
\end{figure}

Aside from the direct runtime comparison above, we also point out that while a shorter (forward) runtime per batch ensures a faster inference time, it does not necessarily result in faster overall training time as the convergence speed might differ among methods. To account for the latter, we compare the evolution of training losses in \cref{fig:runtime} when training on the Rotating MNIST dataset. The upper plot shows \emph{training MSE vs. number of epochs} and reveals that our method needs fewer updates to converge, whereas the same model but with latent dynamics replaced by a latent ODE \cite{Chen18a} or latent SDE \cite{Li20a} in $\mathbb{R}^n$ converge at approximately the same rate. The bottom plot accounts for \emph{different runtimes per batch} and reveals that our method needs only around half of the training time than the latent SDE \cite{Li20a} variant in $\mathbb{R}^n$.

We want to point out, though, that runtime is, to a large extent, implementation-dependent, and therefore, the presented results should be taken with caution. For the baselines, we used publicly available implementations of ODE/SDE solvers, \ie, 
\url{https://github.com/rtqichen/torchdiffeq} for the latent ODE \cite{Chen18a} models and
\url{https://github.com/google-research/torchsde} for the latent SDE \cite{Li20a} models.

\paragraph{Comparison to non-ODE/SDE models.} 
\begin{wraptable}[10]{r}{5.2cm}
  \vspace{-12pt}
  \centering
  \small
  \caption{\label{table:runtime_wosde} Relative timing results (\emph{forward+backward} pass) on \textbf{PhysioNet (2012)} for both quantization levels (reported in the main manuscript).}
  \vskip0.5ex
  \begin{tabular}{lrr}
    \toprule
     & \textbf{6 min} & \textbf{1 min} \\
    \midrule
    CRU             & 22$\times$  & 25$\times$\\
    f-CRU           & 17$\times$  & 20$\times$ \\
    mTAND-Full      &  1$\times$   &  1$\times$ \\
    \midrule
    \textbf{Ours}   & 5$\times$   & 55$\times$  \\
    \bottomrule
  \end{tabular}
  \end{wraptable}
  Finally, we present a runtime comparison with methods that do not model an underlying (stochastic) differential equation, \ie, mTAND-Full and CRU. This relative runtime comparison was performed on the PhysioNet  (2012) dataset and the runtimes/batch are averaged over batches of size 50; results are summarized in \cref{table:runtime_wosde}. The PhysioNet (2012) dataset is particularly interesting concerning runtime, as it contains the most time points (480 potential time points for a quantization level of 6 minutes and 2,880 time points for a quantization level of 1 minute).

\emph{All runtime experiments were performed on the same hardware (see \cref{subsection:appendix:computinginfrastructure} with a GeForce RTX 2080 Ti GPU).}

\subsection{Stochastic Lorenz attractor}
To illustrate that our model is expressive enough to model difficult dynamics, we replicate the experiment by \cite{Li20a} of fitting a stochastic Lorenz attractor. \cref{fig:stoch_lorenz} shows 75 posterior sample paths.

\begin{figure}[H]
  \centering
  \includegraphics[scale = 0.425]{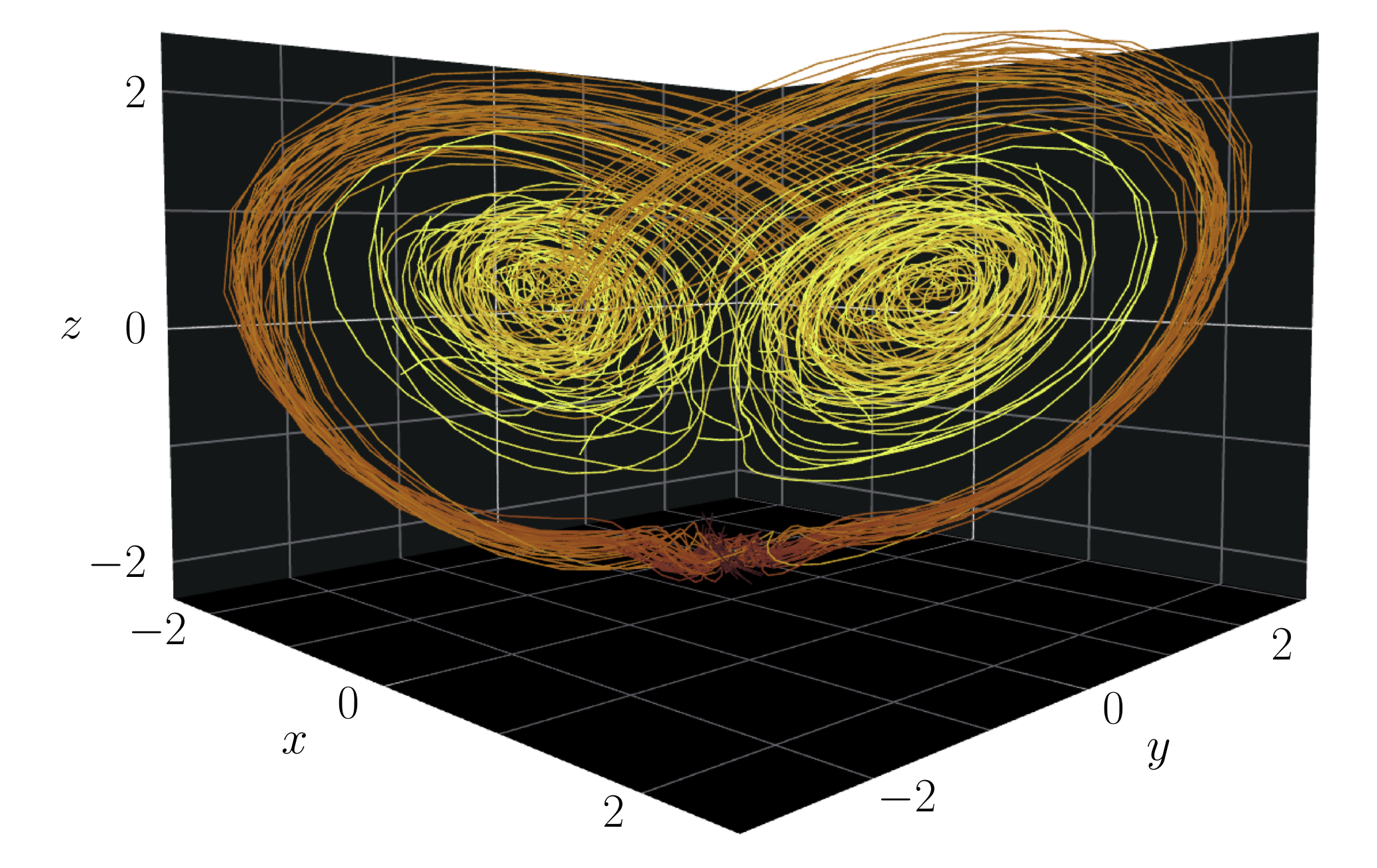}
  \caption{Shown are 75 posterior sample paths from our model on
  the stochastic Lorenz attractor data of \cite{Li20a}.}
  \label{fig:stoch_lorenz}
\end{figure}  

\subsection{Uncertainty quantification}
\label{appendix:subsection:uncertainty_quantification}
In the main part of this manuscript, we primarily focussed on the interpolation and regression/classification capabilities of our method on existing benchmark data. It is, however, worth pointing out that our generative model can be utilized for \emph{uncertainty quantification}. We can sample from the probabilistic model outlined in \cref{section:sec_methodology} and use these samples to estimate the underlying distribution. While we did not evaluate the uncertainty quantification aspect as thoroughly as the regression/classification performance, we present preliminary qualitative results on the Pendulum angle regression experiment in \cref{fig:uncertainty}. This figure shows density estimates for the angle predictions on testing data. The spread of the angle predictions is (as one would expect) sensitive to the KL-divergence weight in the loss function. The present model was trained with a weight of 1e-3.

\begin{figure}
  \centering
  \includegraphics[width=\textwidth]{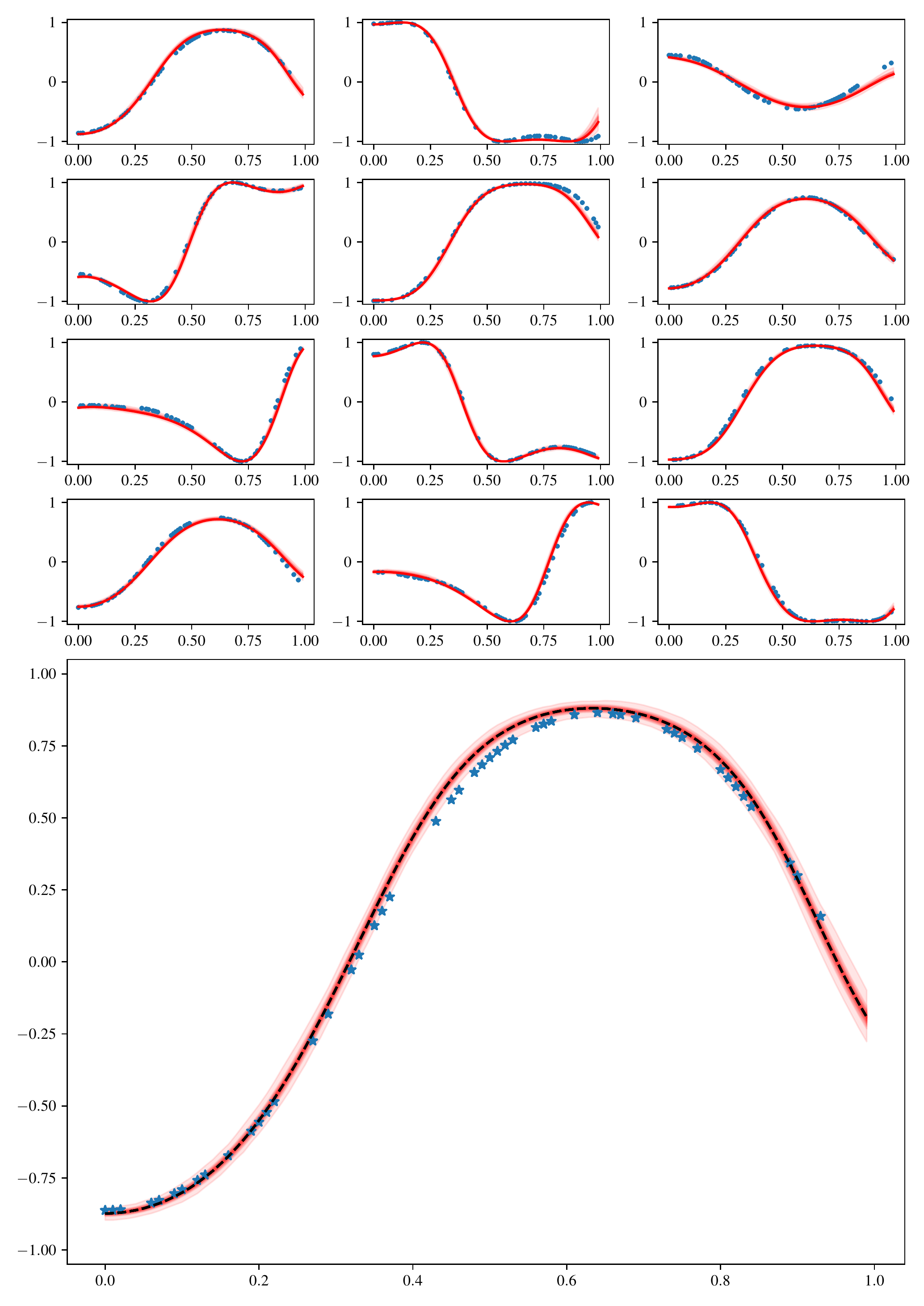}
  \caption{\textbf{Top}: Visualization of quality of fit and uncertainty for the first 12 testing time series in the pendulum regression task. Shown is the ground truth (in \textcolor{tabblue}{blue}) and the predicted angle (sine). Shaded regions correspond to densities, estimated from quantiles of 1,000 draws from the posterior distribution. \textbf{Bottom}: Zoomed-in plot of the first time series.}
  \label{fig:uncertainty}
\end{figure}

\section{Implementation}
\label{section:appendix:implementation}
To highlight that our approach can be implemented in a few lines of code,
Listings \ref{alg:pathinliealgebra}, \ref{alg:genbasis} and \ref{alg:pathinson} provide a \emph{minimal working example} (in PyTorch) of how to obtain solutions to an SDE on $\mathbb{S}^{n-1}$. The only part left out in this example is the generation of the time evolution of $\mathbf{K}$. To execute the code snippet in \cref{alg:usage}, the reference implementation of the
power spherical distribution\footnote{\url{https://github.com/nicola-decao/power_spherical}}
from \cite{DeCao20a} as well as the \texttt{einops} package\footnote{\url{https://einops.rocks/}}
are needed.

\vskip1ex
\begin{listing}[h!]
\caption{\label{alg:pathinliealgebra} SDE solver in Lie algebra $\so{n}$.}
\begin{minted}[mathescape,
  baselinestretch=1.2,
  fontsize=\footnotesize,
  linenos
  ]{python}
    from einops import repeat

    class PathSampler(nn.Module):
        def __init__(self, dim):
            super(PathSampler, self).__init__()
            self.register_buffer("skew_sym_basis", gen_basis(dim))
            self.alpha = nn.Parameter(torch.tensor(1./dim, dtype=torch.float32))

        def forward(self, Kt, delta):
            bs, nt = Kt.shape[0], Kt.shape[1]
            Et = repeat(self.skew_sym_basis, 'k a b -> bs nt k a b', bs=bs, nt=nt)
            noise = torch.randn(bs, nt, Et.size(2), 1, 1, device=Kt.device)
            Vt = (noise*Et*delta.sqrt()*self.alpha).sum(2)
            return Kt*delta + Vt
\end{minted}
\end{listing}

\begin{listing}[h!]
\caption{\label{alg:genbasis}Generate basis in $\so{n}$.}
\begin{minted}[mathescape,
    baselinestretch=1.2,
    fontsize=\footnotesize,
    linenos
    ]{python}
    def gen_basis(n):
        r = int(n*(n-1)/2)
        B = torch.zeros(r, n, n)
        for nth, (i,j) in enumerate(torch.combinations(torch.arange(n))):
            B[nth,i,j] = 1.
        return B - B.transpose(1,2)
\end{minted}
\end{listing}

\begin{listing}[h!]
    \caption{\label{alg:pathinson}Generate SDE solution on $\mathbb{S}^{n-1}$.}
    \begin{minted}[mathescape,
        baselinestretch=1.2,
        fontsize=\footnotesize,
        linenos
        ]{python}
    from torch.distributions import Distribution
    from torch import Tensor

    def sample(t: Tensor, pzx: Distribution, Kt: Tensor, sampler: PathSampler):
        z0 = pzx.rsample().unsqueeze(-1)
        delta = torch.diff(t)[0]
        Ot = sampler(Kt, delta)
        Qt = torch.matrix_exp(Ot.contiguous())

        zi = [z0]
        for t_idx in range(0, len(t)-1):
            Qtz = torch.matmul(Qt[:,t_idx,:,:], zi[-1])
            zi.append(Qtz)
        return torch.stack(zi, dim=1).squeeze()
    \end{minted}
\end{listing}

\begin{listing}[h!]
    \caption{\label{alg:usage}Example usage with dummy $\mathbf{K},\boldsymbol{\mu}, \kappa$.}
    \begin{minted}[mathescape,
        baselinestretch=1.2,
        fontsize=\footnotesize,
        linenos
        ]{python}
    import torch
    import torch.nn as nn
    from power_spherical import PowerSpherical

    n = 16
    batch_size = 4
    num_time_points = 100

    device = 'cuda' if torch.cuda.is_available() else 'cpu'
    t = torch.linspace(0, 1, num_time_points, device=device)
    sampler = PathSampler(n).to(device)

    mu = torch.randn(batch_size, n, device=device)    # batch of $\boldsymbol{\mu}'s$
    kappa = torch.rand(batch_size, 1, device=device)  # batch of $\kappa$'s
    pzx = PowerSpherical(
        torch.nn.functional.normalize(mu),
        kappa.squeeze())

    # Kt just contains random skew-symmetric matrices in this example
    Kt = torch.randn(batch_size, len(t), n, n, device=device)
    Kt = Kt - Kt.transpose(2,3)
    zis = sample(t, pzx, Kt, sampler)
    # zis is now a tensor of size (bs,num_time_points,n)
    \end{minted}
    \end{listing}

\clearpage
\section{Theory}
\label{section:appendix:theory}

In this part, we provide further details on the construction of SDEs on homogeneous spaces of quadratic matrix Lie groups $\mathcal{G}$, and derive the formula 
for the KL divergence between solutions of such SDEs that appeared in the main manuscript.

\subsection{Background on stochastic differential equations (SDEs)}
We first make some initial remarks on stochastic processes and on stochastic differential equations (SDEs) in $\mathbb{R}^n$ and introduce the concepts and notations that are used throughout this final chapter of the supplementary material.

Given some probability space $(\Omega, \mathcal F, P)$, a stochastic process on $[0,T]$ with state space $\mathbb R^n$ is a function $X: \Omega \times [0,T] \to \mathbb R^n$ such that for each point of time $t\in [0,T]$ the map $X_t = X(\cdot,t): \Omega\to \mathbb R^n,~ \omega \mapsto X(\omega, t)$ is measurable. In other words, $\set{X_t:~t\in[0,T]}$ is a collection of random variables $X_t$ indexed by $t$, cf. \cites[Chapter I.1]{Revuz99a}[Chapter 2.1]{Oksendal95}[Chapter 1.1]{Karatzas91}. Conversely, each sample $\omega\in \Omega$ defines a function $X(\omega, \cdot): [0,T]\to \mathbb R^n,~t\mapsto X(\omega, t)$ called a realization or sample path. Thus, a stochastic process models randomness in the space of functions $[0,T]\to \mathbb R^n$, or in a "reasonable" subspace thereof. In more formal terms, the law or path distribution $\mathbb P^X$ associated to a stochastic process $X$ is defined as the push-forward of the measure $P$ to this function space \citearefs[Chapter I.3]{Revuz99a}.

Arguably the most important stochastic process is the Wiener process $W$ which formalizes the physical concept of the Brownian motion, \ie, the random motion of a small particle inside a fluid caused by thermal fluctuations, see \citearefs[Chapter 3]{Sarkka19} for a heuristic approach to the topic.
For stochastic processes $X$ that are sufficiently well-behaved, one can define a stochastic integral $\smash{\int_0^t X_s \dd W_s}$, called Itô integral of $X$ with respect to $W$, cf. \cites[Chapter IV.2]{Revuz99a}[Chapter 3.1]{Oksendal95}[Chapter 3.2]{Karatzas91}. Notably, the result, is again a stochastic process so that one can implicitly define stochastic processes via an Itô integral equation of the form
\begin{equation}
    \label{eq:appendix:intsde}
    X_t = X_0 + \int_{0}^{t} f(X_s, s) \dd s + \int_{0}^{t} \sigma(X_s, s) \dd W_s
\end{equation}
often denoted as an Itô stochastic differential equation (SDE)
\begin{equation}
    \label{eq:appendix:symbolicSDE}
    \dd X_t = f(X_t, t) \dd t + \sigma(X_t, t) \dd W_t, \quad X_0 = \xi
    \enspace.
\end{equation}

Here, $f: \mathbb R^n \times [0,T]\to \mathbb R^n$ is called the drift coefficient, $\sigma: \mathbb R^n \times [0,T] \to \mathbb R^{n\times m}$ is called the diffusion coefficient, and the initial value $\xi$ is a random variable that is independent of the Wiener process $W$.
Further information about different notions of solutions to an SDE, \ie, strong and weak solutions, as well as a comprehensive consideration of their existence and uniqueness can be found in \cites[Chapter 9]{Revuz99a}[Chapter 5]{Oksendal95}[Chapter 5]{Karatzas91}.
Additionally, for a treatment of SDEs from a numerical perspective, we recommend \citearefs{Kloeden91}.

In this work, we consider time series data as realizations of a stochastic process, \ie, as samples from a function-valued random variable as described in \cref{subsection:preliminaries}.
To perform variational inference in such a setting, we utilize SDEs to construct function-valued distributions.
Specifically, we consider non-autonomous linear Itô SDEs with multiplicative noise that evolve on a homogeneous space of a (quadratic) matrix Lie group.
To ensure the existence and uniqueness of strong solutions, the drift and diffusion coefficients that appear in the considered SDEs satisfy the Lipschitz and growth conditions of \cite[Theorem 5.2.1]{Oksendal95}

\paragraph*{A remark on notation.}
As outlined in \cref{sec:parametrizing_posterior}, we aim to learn the parameters $\boldsymbol \phi$ of an approximate posterior. Thus, the SDEs we consider are not specified by an \emph{initial random variable} $X_0 = \mathbf \xi$, but by an \emph{initial distribution} $\mathcal P^{X_0}$ such that $X_0 \sim \mathcal P^{X_0}$. Therefore, we utilize the notation
\begin{alignat}{3}
    \label{eq:SDe_with_initial_dist}
    \dd X_t &= f(X_t,t) \dd t + \sigma(X_t, t) \dd W_t \enspace,
    \quad&&
    {X_0} &&\sim \mathcal{P}^{X_0} \enspace.
\end{alignat}
A (strong) solution to such an SDE is then understood as a stochastic process $X$ that
\begin{enumerate*}[label=\textit{(\roman*)}]
    \item is adapted (see \citearefs[Definition 4.2]{Revuz99a}),
    \item has continuous sample paths $t \mapsto X(\omega,\cdot)$,
    \item almost surely satisfies the integral equation in Eq.~\eqref{eq:appendix:intsde}, and
    \item there is a random variable $\mathbf \xi\sim \mathcal P^{X_0}$ that is independent of the Wiener process $W$ such that $X_0 = \xi$ holds almost surely.
\end{enumerate*}
In accordance with \cite[Theorem 5.2.1]{Oksendal95}, we demand that and the initial distribution $\xi$ has finite second moment, but, of course, distinct choices of $X_0=\xi$ might define distinct strong solutions $X$.

\subsection{SDEs in quadratic matrix Lie groups}
\label{subsection:appendix:sdes_in_quadratic_lie_groups}

\begin{definition}[cf. \cite{Lee03a}]
    Let $\mathbb{R}^{n \times n}$ be the vector space of real $n\times n$ matrices.
    We denote the general linear group, i.e., the group of invertible real $n\times n$ matrices equipped with the matrix product, as
    \begin{equation}
        \GL{n} = \set{\mathbf{A} \in \mathbb{R}^{n \times n} ~|~ \det{\mathbf{A}}\neq 0}.
    \end{equation}
    $\GL{n}$ is a Lie group with Lie algebra
    \begin{equation}
        \gl{n} = \bigg(\mathbb{R}^{n \times n}, [\cdot, \cdot] \bigg)\enspace,
    \end{equation}
    \ie, $\mathbb R^{n\times n}$ together with the commutator bracket $(\mathbf{A},\mathbf{B}) \mapsto [\mathbf{A},\mathbf{B}] = \mathbf{A}\mathbf{B} - \mathbf{B}\mathbf{A}$.
\end{definition}

In the context of this work, we specifically consider \emph{quadratic matrix} Lie groups.
\vskip2ex
\begin{definition}[cf. \citearefs{Bloch08}]
    Given some fixed matrix $\mathbf{P} \in \mathbb{R}^{n \times n}$, the quadratic matrix Lie
    group $\mathcal{G}$ is defined as
    \begin{equation}
    \mathcal{G} = \mathcal{G}(\mathbf{P}) =
    \set{\mathbf{A} \in \GL{n}: \mathbf{A}^\top \mathbf{P}\mathbf{A} = \mathbf{P}}\enspace.
    \end{equation}
    The Lie algebra $\mathfrak g$ of $\mathcal G$ then consists of the $n \times n$ matrices $\mathbf{V}$ that satisfy $\mathbf V^{\top} \mathbf P + \mathbf P \mathbf V = \mathbf 0$.
\end{definition}

Moreover, we consider non-autonomous linear Itô SDEs in quadratic matrix Lie groups $\mathcal{G}$. The following lemma (\cref{lemma:appendix:sdeinliegroup}) provides sufficient conditions
to ensure that an SDE solution evolves in $\mathcal G$.
This will pave the way to introduce a class of SDEs on homogeneous spaces in \cref{subsection:appendix:sdes_on_homogeneous_spaces} via the Lie group action.

\begin{lemma}[cf. {\cite[Chapter 4.1]{Brockett73a}}, \cite{Chirikjian12a}]
\label{lemma:appendix:sdeinliegroup}
Let $G$ be a stochastic process that solves the matrix Itô SDE
\begin{equation}
    \label{eqn:appendix:itomatrixsde}
    \dd G_t =  \left( \mathbf{V}_0(t) \dd t + \sum_{i=1}^m  \dd w^i_t \mathbf {V}_i \right) G_t\enspace,
    \quad
    G_0 = \Id \enspace,
\end{equation}
where $\mathbf{V}_0: [0,T]\to \mathbb{R}^{n \times n}, \mathbf{V}_1,\ldots,\mathbf{V}_m \in
\mathbb{R}^{n \times n}$ and $w^1,\ldots,w^m$ are independent scalar-valued Wiener processes.
The process $G$ stays in the quadratic matrix Lie group $\mathcal{G}(\mathbf{P})$ defined by $\mathbf P \in \mathbb R^{n\times n}$ if
\begin{enumerate}[label=(\roman*)]
    \item\label{lemma:appendix:sdeinliegroup:c1} $\mathbf{V}_0(t)= \mathbf{K}(t) + \frac{1}{2} \sum_{i=1}^{m} \mathbf{V}_i^2$,\quad with \quad
    $\mathbf{K}: [0,T] \to \mathfrak{g}$,\ and
    \item\label{lemma:appendix:sdeinliegroup:c2} $\mathbf{V}_1, \dots, \mathbf V_m \in \mathfrak{g}\enspace.$
\end{enumerate}
\end{lemma}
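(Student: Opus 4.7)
The plan is to apply Itô's formula to the matrix-valued process $Y_t = G_t^\top \mathbf{P}\, G_t$ and show that $\dd Y_t \equiv \mathbf{0}$. Since $G_0 = \Id$ gives $Y_0 = \mathbf P$, this implies $Y_t = \mathbf P$ almost surely for all $t\in[0,T]$, which is exactly the defining condition for $G_t \in \mathcal G(\mathbf P)$.

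First, transposing \eqref{eqn:appendix:itomatrixsde} yields $\dd G_t^\top = G_t^\top \mathbf V_0(t)^\top \dd t + \sum_{i=1}^m \dd w_t^i\, G_t^\top \mathbf V_i^\top$. Applying the Itô product rule entrywise to the product $G_t^\top \cdot (\mathbf P\, G_t)$ gives
\begin{equation*}
    \dd Y_t = (\dd G_t^\top)\, \mathbf P\, G_t \;+\; G_t^\top \mathbf P\, (\dd G_t) \;+\; \dd \langle G^\top, \mathbf P\, G \rangle_t ,
\end{equation*}
where the quadratic covariation, by independence of the $w^i$ and $\dd w_t^i \dd w_t^j = \delta_{ij}\dd t$, equals $\sum_{i=1}^m G_t^\top \mathbf V_i^\top \mathbf P \mathbf V_i G_t\, \dd t$.

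Second, I would collect the $\dd w_t^i$ terms: they sum to $\sum_{i=1}^m \dd w_t^i\, G_t^\top(\mathbf V_i^\top \mathbf P + \mathbf P \mathbf V_i)\, G_t$, and these vanish by hypothesis \ref{lemma:appendix:sdeinliegroup:c2}, which states precisely $\mathbf V_i^\top \mathbf P + \mathbf P \mathbf V_i = \mathbf 0$. The remaining drift is $G_t^\top \bigl[\mathbf V_0(t)^\top \mathbf P + \mathbf P \mathbf V_0(t) + \sum_{i=1}^m \mathbf V_i^\top \mathbf P \mathbf V_i\bigr] G_t\, \dd t$. Substituting hypothesis \ref{lemma:appendix:sdeinliegroup:c1}, the $\mathbf K$-part contributes $\mathbf K(t)^\top \mathbf P + \mathbf P \mathbf K(t) = \mathbf 0$ because $\mathbf K(t) \in \mathfrak g$.

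Third, I handle the quadratic part by repeated use of $\mathbf V_i^\top \mathbf P = -\mathbf P \mathbf V_i$: this gives $(\mathbf V_i^\top)^2 \mathbf P = \mathbf V_i^\top(-\mathbf P \mathbf V_i) = \mathbf P \mathbf V_i^2$, hence $\tfrac12[(\mathbf V_i^\top)^2 \mathbf P + \mathbf P \mathbf V_i^2] = \mathbf P \mathbf V_i^2$, while $\mathbf V_i^\top \mathbf P \mathbf V_i = -\mathbf P \mathbf V_i^2$. Summing, the bracketed expression reduces to $\mathbf 0$, so $\dd Y_t = \mathbf 0$ and the claim follows.

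The only mild obstacle is bookkeeping: one must apply Itô's product rule carefully to a noncommutative matrix product (the componentwise application is fine, but the order of the factors $G_t^\top$, $\mathbf P$, $G_t$ must be preserved throughout), and the quadratic covariation must be derived from the concrete form of $\dd G_t$ and $\dd G_t^\top$ rather than cited abstractly. Once that is set up, everything reduces to two purely algebraic identities in $\mathfrak g$, both of which follow directly from the defining relation $\mathbf V^\top \mathbf P + \mathbf P \mathbf V = \mathbf 0$.
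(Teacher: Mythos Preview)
Your proposal is correct and takes essentially the same approach as the paper: both apply the It\^o product rule to $G_t^\top \mathbf P G_t$, expand the differential into the $\dd w_t^i$ terms plus a drift containing the quadratic-covariation correction, and then verify algebraically that both pieces vanish under hypotheses \ref{lemma:appendix:sdeinliegroup:c1} and \ref{lemma:appendix:sdeinliegroup:c2}. Your final algebraic cancellation (rewriting $(\mathbf V_i^\top)^2\mathbf P$ and $\mathbf V_i^\top\mathbf P\mathbf V_i$ via $\mathbf V_i^\top\mathbf P=-\mathbf P\mathbf V_i$) is slightly more explicit than the paper's, but the argument is the same.
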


\vskip.2cm
\begin{remark*}
    Although the matrices $\mathbf V_1, \dots, \mathbf V_m\in \mathfrak g$ can be chosen arbitrarily, it is natural to choose an orthogonal basis of the Lie algebra $\mathfrak g$. Then $m = \dim \mathfrak g$ and each direction in the vector space $\mathfrak g$ provides an independent contribution to the overall noise in Eq.~\eqref{eqn:appendix:itomatrixsde}.
\end{remark*}

\begin{proof}
The process $G$ stays in the quadratic matrix Lie group $\mathcal{G}(\mathbf P)$ if $G_t^{\top}\mathbf{P}G_t$ is constant, \ie, if $\dd (G_t^\top \mathbf{P}G_t) = \mathbf 0$.
The latter is to be understood in the Itô sense, \ie,
\begin{align}
    \mathbf{0} \
    & = \dd (G_t^\top \mathbf{P}G_t)
    = \dd G_t^\top \mathbf P G_t + G_t \mathbf P \dd G_t + \dd G_t^\top \mathbf P \dd G_t
    \nonumber
    \\
    & =
        G_t^\top
        \left(
            \left( \mathbf{V}_0^{\top}(t) \mathbf{P}
            +\mathbf{P}\mathbf{V}_0(t) \right)
     \dd t
    +
        \sum_{i=1}^{m} \dd w_t^{i}  \left(
            \mathbf{V}_i^\top\mathbf{P} + \mathbf{P} \mathbf V_i \right)
    +
        \sum_{i=1}^{m} \dd t \ \mathbf{V}_i^\top \mathbf{P}
        \mathbf{V}_i \right) G_t
    \label{eqn:appendix:itomatrixsdewcond}\enspace.
\end{align}
Here, $\dd w_t^i \dd w_t^j = \delta_{ij} \dd t$, as $w^i$ and $w^j$ are independent scalar Wiener processes.
Eq.~\eqref{eqn:appendix:itomatrixsdewcond} implies the following two conditions:
\begin{equation} \label{eqn:appendix:conditions}
    \mathbf{V}_i^\top\mathbf{P} + \mathbf{P} \mathbf{V}_i = \mathbf{0}
    \enspace \text{for $i=1,\dots,m$ \quad and} \quad
    \mathbf{V}_0^{\top}(t) \mathbf{P} + \mathbf{P} \mathbf{V}_0(t) = - \sum_{i=1}^{m} \mathbf{V}_i^\top \mathbf{P} \mathbf{V}_i \enspace.
\end{equation}
As the following calculation shows, these conditions are satisfied if $\mathbf V_1,\dots,\mathbf V_m \in \mathfrak g$ and
$\mathbf V_0(t) = \mathbf K(t) +\frac 1 2 \sum_{i=1}^m \mathbf V_i^2$ where $\mathbf{K} : [0,T] \to \mathfrak{g}$.
\begin{align}
    \gdef\SumVi{\left(\sum_{i=1}^m \mathbf V_i^2\right)}
    \mathbf{V}_0^{\top}(t) \mathbf{P} + \mathbf{P} \mathbf{V}_0(t)
    &=
    \underbrace{\mathbf{K}^{\top}(t) \mathbf{P} + \mathbf{P} \mathbf{K}(t)}_{=\mathbf 0}
    +
    \frac 1 2 \SumVi^\top \mathbf P + \frac 1 2 \mathbf P \SumVi
    \\
    &=
    \frac 1 2\sum_{i=1}^{m}
    \left(
        \mathbf V_i^\top \mathbf V_i^\top  \mathbf P
        + \mathbf P \mathbf V_i \mathbf V_i
    \right)
    =
    - \frac 1 2\sum_{i=1}^{m}
    \left(
        \mathbf V_i^\top \mathbf P \mathbf V_i
        +  \mathbf V_i^\top \mathbf P \mathbf V_i
    \right)
    \\
    &=
    - \sum_{i=1}^{m}
        \mathbf V_i^\top \mathbf P \mathbf V_i
    \enspace.
    \let\SumVi\undefined
\end{align}
\end{proof}

\paragraph{Numerical solution.}
\label{subsubsection:appendix:numerical_solution}

\cref{alg:cap} lists our choice of a numerical solver, which is the one-step geometric Euler-Maruyama scheme from \cite{Marjanovic18a} (see also \cite[Section 3.1]{Muniz22a}).

This scheme consists of three parts.
First, we divide $[0,T]$ into subintervals $[t_j, t_{j+1}]$, starting with $t_0 = 0$ until $t_{j+1} = T$.
Second, we solve Eq.~\eqref{eqn:appendix:itomatrixsde} pathwise in $\mathcal{G}$ over successive intervals $[t_j, t_{j+1}]$ by applying a \textit{McKean-Gangolli Injection} \cite{Chirikjian12a} in combination with a numerical solver.
Third, the (approximate) path solution is eventually computed as $G_{j+1} = G_j \exp(\Omega_{j+1})$.

\renewcommand{\algorithmiccomment}[1]{\hfill$\triangleright\ $\textit{
  \textcolor{tabgreen}{#1}}}
\begin{algorithm}[H]
  \caption{Geometric Euler-Maruyama Algorithm (g-EM) \cite{Marjanovic18a}.
  }\label{alg:cap}
  \begin{algorithmic}[1]
  \State $t_0 \gets 0$
  \State $G_0 \gets \Id $
  \Repeat\ over successive subintervals $[t_j,t_{j+1}]$, $j\geq0$
  \State $\delta_j \gets t_{j+1}-t_j$
  \State $\delta w^i_j \sim \mathcal N(0,\delta_j)$, \quad\text{for~}$i=1,\ldots,m$
  \State $\Omega_{j+1} \leftarrow \left(\mathbf{V}_0(t_j) - \frac{1}{2} \sum_{i=1}^{m} \mathbf{V}^{2}_i \right) \delta_j + \sum_{i=1}^m  \delta w^i_j \mathbf{V}_i$
  \Comment{$\Omega_{j+1} \in \mathfrak{g}$.}
  \State $G_{j+1} \gets G_j \exp\left(\Omega_{j+1}\right)$
  \Comment{Matrix exp.
  ensures that $G_{j+1} \in \mathcal{G}$.}
  \Until $t_{j+1} =T$
\end{algorithmic}
\end{algorithm}

\begin{remark*}
We remark that in the original derivation of the algorithm in \cite{Marjanovic18a}, the
group acts from the \emph{right} (as in \cref{alg:cap}). However, the same derivation can be done for a \emph{left} group action which is what we use in our implementation and
the presentation within the main manuscript. Furthermore, under condition
\ref{lemma:appendix:sdeinliegroup:c1} from \cref{lemma:appendix:sdeinliegroup}, we see that
\texttt{line 6} reduces to
$$
\Omega_{j+1} \leftarrow \mathbf{K}(t_j) \delta_j + \sum_{i=1}^m  \delta w^i_j \mathbf{V}_i\enspace.
$$
\end{remark*}

\subsection{SDEs in homogeneous spaces of quadratic matrix Lie groups}
\label{subsection:appendix:sdes_on_homogeneous_spaces}

In the following, we use our results of \cref{subsection:appendix:sdes_in_quadratic_lie_groups} to define stochastic processes
in a homogeneous space $\mathbb{M}$ of a quadratic matrix Lie group $\mathcal{G}$. The main idea is, given a starting point $Z_0 \in \mathbb{M}$, to translate a path $G: [0,T] \to \mathcal{G}$ into a path $Z = G \cdot Z_0$ in $\mathbb{M}$ via the Lie group action. Notably, this construction can be done at the level of stochastic processes, \ie, a stochastic process $G: [0,T] \times \Omega \to \mathcal{G}$ induces a stochastic process $Z = G \cdot Z_0: [0,T] \times \Omega \to \mathbb{M}$ called a \emph{one-point motion}, see \cite[Chapter 2]{Liao04}. In particular, if $Z_0$ follows some distribution $\mathcal P$ in $\mathbb{M}$ and
$G$ solves an SDE of the form as in Eq.~\eqref{eqn:appendix:itomatrixsde}, then $Z$ solves $\dd Z_t = \dd G_t \cdot Z_0$, \ie,
\begin{equation}
    \label{eq:appendix:SDEinM}
    \dd Z_t
    =  \left( \mathbf{V}_0(t)  \dd t
    +  \sum_{i=1}^m \dd w^i_t \mathbf{V}_i \right) \cdot Z_t,
    \quad
    Z_0 \sim \mathcal P^{Z_0} \enspace.
\end{equation}
Notably, Eq.~\eqref{lemma:appendix:sdeinliegroup} guarantees that a solution to the SDE in Eq.~\eqref{eqn:appendix:itomatrixsde} remains in the group $\mathcal{G}$, and consequently a solution to Eq.~\eqref{eq:appendix:SDEinM} remains in $\mathbb{M}$.
As we only consider Lie group actions that are given by matrix-vector multiplication, we substitute the notation $G \cdot Z$ by the more concise $G Z$ from now on.

\begin{remark*}
For subsequent results, it is convenient to represent Eq.~\eqref{eq:appendix:SDEinM} in the form of
\begin{equation}
    \label{eq:appendix:VZdW}
    \dd Z_t
    =  \mathbf{V}_0(t) Z_t \dd t
    + \left[\mathbf{V}_1 Z_t  \, , \mathbf{V}_2 Z_t \, , \dots, \mathbf{V}_m Z_t \right] \dd W_t,
    \quad
    Z_0 \sim \mathcal P^{Z_0} \enspace,
\end{equation}
\vskip0.5ex
\ie, with diffusion coefficient $\sigma: (\mathbf z,t) \mapsto [\mathbf{V}_1 \mathbf z, \dots, \mathbf{V}_m \mathbf z] \in \mathbb{R}^{n\times m}$ and an $m$-dimensional Wiener process  $W = [w^1,w^2,\dots,w^m]^\top$.
Notably, the matrix $\sigma(\mathbf z,t) \sigma(\mathbf z,t)^{\top}$
is given by
\begin{equation}
    \label{eq:h_times_htop}
    \sigma(\mathbf z,t) \sigma(\mathbf z,t)^{\top}
    =
    \sum_{i=1}^m \mathbf{V}_i \mathbf{z}\mathbf{z}^\top \mathbf{V}^\top_i
    \enspace.
\end{equation}
\end{remark*}

\paragraph{Numerical solution.}
As $Z = GZ_0$, a numerical solution to Eq.~\eqref{eq:appendix:SDEinM} is easily obtained from the numerical approximation of $G$, \ie, the numerical solution to Eq.~\eqref{eqn:appendix:itomatrixsde} via \cref{alg:cap}.
In fact, it suffices to replace \texttt{line 2} by $Z_0 \sim \mathcal P^{Z_0}$ and
\texttt{line 7} by $Z_{j+1} \leftarrow \exp(\Omega_{j+1})Z_j$. Notably, we do not need to compute the matrices $G_j$ but only the iterates $\Omega_j$.

\subsection{SDEs on the unit \texorpdfstring{$\boldsymbol{n}$}{n}-sphere}
\label{subsection:appendix:sdes_on_sphere}
Next, we consider the homogeneous space $(\mathbb S^{n-1}, \SO{n})$, \ie, the unit $n$-sphere  $\mathbb S^{n-1}$ together with the action of the group $\SO{n}$ of rotations.

Notably, the orthogonal group $\mathrm O(n)$ is the quadratic matrix Lie group $\mathcal G(\mathbf P)$ defined by $\mathbf P = \Id$, \ie, $\mathrm O(n) = \set{\mathbf A \in \mathbb R^{n\times n}~|~ \mathbf A^\top \mathbf A = \Id}$ and the special orthogonal group $\SO{n}$ is its connected component that contains the identity $\Id$.
Thus, \cref{lemma:appendix:sdeinliegroup} ensures that a
solution to Eq.~\eqref{eq:appendix:SDEinM}
remains in $\mathbb S^{n-1}$ if $\mathbf{K} :[0,T] \to \mathfrak {so}(n)$ and $\mathbf{V}_1,\dots,
\mathbf{V}_m \in \mathfrak {so}(n)$.

In the variational setting of the main manuscript, we consider stochastic processes that solve such an SDE for the choice of
$
\set{\mathbf{V}_i}_{i=1}^{m} = \set{\alpha (\mathbf{e}_k \mathbf{e}_l^{\top} - \mathbf{e}_l \mathbf{e}_k^{\top})}_{1\le k<l \le n}
$, where $\alpha \neq 0$ and $\mathbf e_k$ denotes the $k$-th standard basis vector of $\mathbb{R}^n$. With that choice, Eq.~\eqref{eq:appendix:SDEinM} becomes
\begin{equation}
    \label{eq:appendix:sdeonsphere0}
    \dd Z_t
    =  \left(
            \mathbf K(t)
            - \alpha^2 \tfrac{n-1}{2} \Id
        \right) Z_t \dd t
    + \alpha \dd U_t Z_t,
    \quad
    Z_0 \sim \mathcal P^{Z_0} \enspace.
\end{equation}

The diffusion term in Eq.~\eqref{eq:appendix:sdeonsphere0} consists of a matrix-valued noise $\dd U_t$ where $u_{ii} =0$ and $u_{ij} = - u_{ji}$ are independent Wiener processes (for $i<j$). The following lemma (\cref{{prop:appendix:sdeonsphere}}) converts the diffusion term to the standard form of only a vector-valued Wiener process. This facilitates (in \cref{subsection:appendix:kldivs}) deriving the KL-divergence between two solutions of such a SDE from existing results.

\begin{lemma}
    \label{prop:appendix:sdeonsphere}
    Let $W$ be an $n$-dimensional Wiener process and let $\mathcal P^{Z_0}$ be a distribution on $\mathbb{R}^n$ with finite second moment. Further, let $\mathbf K:[0,T] \to \mathfrak {so}(n)$ be continuous.
    The solutions to the SDE in Eq.~\eqref{eq:appendix:sdeonsphere0} and
    \begin{equation}
        \label{eq:appendix:sdeonsphere1}
        \dd Z_t
        =
        \left(
            \mathbf K(t)
            - \alpha^2 \tfrac{n-1}{2}  \Id
        \right) Z_t \dd t
        +
        \alpha (\Id - Z_t Z_t^\top) \dd W_t,  \quad Z_0 \sim \mathcal P^{Z_0}
    \end{equation}
    have the same probability distribution. Furthermore, for a given initial value $Z_0 \stackrel{\text{a.s.}}{=} \xi\sim \mathcal P^{Z_0}$, a sample path solution of each equation also is a sample path solution of the other one.
\end{lemma}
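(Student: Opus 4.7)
The plan is to reduce the claim to a comparison of the diffusion covariances $\sigma\sigma^\top$ of the two SDEs on the unit sphere, invoking equality of martingale problems. The drifts of Eq.~\eqref{eq:appendix:sdeonsphere0} and Eq.~\eqref{eq:appendix:sdeonsphere1} are identical, so only the diffusion terms require analysis.

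First, I would rewrite Eq.~\eqref{eq:appendix:sdeonsphere0} in the standard vector-Wiener form of the remark preceding the lemma: its diffusion coefficient is the $n\times\binom{n}{2}$ matrix $\sigma^{(0)}(\mathbf z)$ whose columns are $\alpha \mathbf V_{kl}\mathbf z = \alpha(z_l \mathbf e_k - z_k \mathbf e_l)$ for $1\le k<l\le n$. A direct calculation, summing the rank-one outer products $(\mathbf V_{kl}\mathbf z)(\mathbf V_{kl}\mathbf z)^\top$ over the index set and grouping diagonal versus off-diagonal contributions, yields
\begin{equation*}
    \sigma^{(0)}(\mathbf z)\sigma^{(0)}(\mathbf z)^\top
    = \alpha^2\bigl(\norm{\mathbf z}^2\,\Id - \mathbf z\mathbf z^\top\bigr).
\end{equation*}
The diffusion of Eq.~\eqref{eq:appendix:sdeonsphere1} is $\sigma^{(1)}(\mathbf z) = \alpha(\Id - \mathbf z\mathbf z^\top)$, which is symmetric, and on $\mathbb S^{n-1}$ is idempotent (the orthogonal projector onto the tangent space), giving
\begin{equation*}
    \sigma^{(1)}(\mathbf z)\sigma^{(1)}(\mathbf z)^\top
    = \alpha^2(\Id - \mathbf z\mathbf z^\top)^2
    = \alpha^2(\Id - \mathbf z\mathbf z^\top).
\end{equation*}
Applying \cref{lemma:appendix:sdeinliegroup} with $\mathbf P = \Id$ ensures that every solution of either SDE almost surely stays in $\mathbb S^{n-1}$, so $\norm{Z_t}=1$ and the two diffusion covariances agree along any solution path.

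With matching drifts and matching diffusion covariances, both SDEs pose the same martingale problem on $C([0,T];\mathbb R^n)$, and uniqueness in law (e.g.\ Karatzas--Shreve Prop.~5.4.6) gives equality of path distributions for any fixed initial law $\mathcal P^{Z_0}$. For the stronger pathwise statement, I would take a strong solution $Z$ of one equation and form the continuous local martingale $M_t := Z_t - Z_0 - \int_0^t b(Z_s,s)\dd s$ with drift $b(\mathbf z,t)=(\mathbf K(t) - \alpha^2\tfrac{n-1}{2}\Id)\mathbf z$; its quadratic covariation agrees with $\int_0^t \sigma^{(i)}(Z_s)\sigma^{(i)}(Z_s)^\top \dd s$ for both $i=0,1$, so the martingale representation theorem (Karatzas--Shreve Thm.~3.4.2, enlarging the probability space if necessary) yields a driving Wiener process of the appropriate dimension for whichever form is required. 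Hence the same sample path $Z$ solves both SDEs. The main obstacle is the algebraic identity $\sigma^{(0)}(\sigma^{(0)})^\top = \sigma^{(1)}(\sigma^{(1)})^\top$, which holds only because $\norm{\mathbf z}=1$ (otherwise the two matrices differ by $\alpha^2(\norm{\mathbf z}^2-1)\Id$); the sphere-confinement of both solutions, already given by \cref{lemma:appendix:sdeinliegroup}, is exactly what permits the reduction to a common martingale problem.
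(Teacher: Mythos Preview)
Your argument matches the paper's in structure: verify that the two diffusion covariances $\sigma\sigma^\top$ coincide on $\mathbb S^{n-1}$, then appeal to a standard equivalence result. The paper compresses your last step into a single citation (Allen, 2008, Theorem~2.1), which yields both the distributional and the pathwise conclusion directly from $\sigma_1\sigma_1^\top=\sigma_2\sigma_2^\top$; your route through martingale-problem uniqueness and the martingale representation theorem is essentially the content behind that citation, so it is more self-contained but longer. The algebraic computation of $\sigma^{(0)}(\sigma^{(0)})^\top$ and $\sigma^{(1)}(\sigma^{(1)})^\top$ is identical to the paper's.

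One point to tighten: \cref{lemma:appendix:sdeinliegroup} with $\mathbf P=\Id$ only guarantees sphere-confinement for Eq.~\eqref{eq:appendix:sdeonsphere0}, since that SDE arises as $Z_t=G_tZ_0$ with $G_t\in\SO{n}$. It says nothing about the Stroock-type SDE Eq.~\eqref{eq:appendix:sdeonsphere1}; for that equation you need a separate check (apply It\^o's formula to $\|Z_t\|^2$ and observe that both the drift and the diffusion of the resulting scalar equation vanish when $\|Z_t\|^2=1$, so $\|Z_t\|^2\equiv 1$ is preserved). Without this, the claim that ``the two diffusion covariances agree along any solution path'' is not justified for solutions of the second SDE, and your reduction to a common martingale problem does not go through. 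The paper's proof is equally informal at this spot---it announces the identity ``for every $\mathbf z\in\mathbb R^n$'' but then silently uses $\|\mathbf z\|^2=1$---so neither presentation is fully rigorous here, but you should at least not attribute the missing step to \cref{lemma:appendix:sdeinliegroup}.
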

\begin{proof}
    W.l.o.g. assume $\alpha=1$.
    The SDEs from Eq.~\eqref{eq:appendix:sdeonsphere0} and Eq.~\eqref{eq:appendix:sdeonsphere1} have the same drift coefficient but a different diffusion term. Recall that Eq.~\eqref{eq:appendix:sdeonsphere0} can be written as in Eq.~\eqref{eq:appendix:VZdW}, and denote the diffusion coefficients of Eq.~\eqref{eq:appendix:VZdW} and Eq.~\eqref{eq:appendix:sdeonsphere1} by $\sigma_1:(\mathbf z,t) \mapsto \left[\mathbf{V}_1 \mathbf z  , \dots, \mathbf{V}_m \mathbf z \right]$  and $\sigma_2: (\mathbf z,t) \mapsto \Id - \mathbf z \mathbf z^{\top}$, respectively.
    The lemma then follows from \citearefs[Theorem 2.1]{Allen08a} which states that in the present setting, the solution to Eq.~\eqref{eq:appendix:sdeonsphere0} and Eq.~\eqref{eq:appendix:sdeonsphere1} have the same distribution and equal sample paths if $\sigma_1 \sigma_1^{\top} = \sigma_2 \sigma_2^{\top}$, \ie, if for every $\mathbf z\in \mathbb R^n$,
    \begin{equation}
        \label{eq:req_for_equiv_sde}
        \sum_{i=1}^m \mathbf V_i \mathbf z \mathbf z^{\top} \mathbf V_i^{\top}
        =
        \left(\Id - \mathbf z \mathbf z^{\top}\right)
        \left(\Id - \mathbf z \mathbf z^{\top}\right)^{{\top}}
        \enspace.
    \end{equation}
    To prove this equality, first, note that $\Id - \mathbf z \mathbf z^{\top}$ is the projection onto the orthogonal complement of $\mathbf z$ and so 
    $$\left(\Id - \mathbf z \mathbf z^{\top}\right) \left(\Id - \mathbf z \mathbf z^{\top}\right)^\top = \Id - \mathbf z \mathbf z^{\top}\enspace.$$
    To calculate the left-hand side, we insert our choice of $\set{\mathbf{V}_i}_{i=1}^{m} = \set{\alpha (\mathbf{e}_k \mathbf{e}_l^{\top} - \mathbf{e}_l \mathbf{e}_k^{\top})}_{1\le k<l \le n}$, which yields
    \begin{align*}
        \sum_{i=1}^m \mathbf V_i \mathbf z \mathbf z^{\top} \mathbf V_i^{\top}
        & =
        -\sum_{k=1}^n  \sum_{l=1}^{k-1}
        \left(
            \mathbf{e}_k \mathbf{e}_l^{\top} - \mathbf{e}_l \mathbf{e}_k^{\top}
        \right)
        \mathbf z \mathbf z^\top
        \left(
            \mathbf{e}_k \mathbf{e}_l^{\top} - \mathbf{e}_l \mathbf{e}_k^{\top}
        \right)
        \\
        & =
        \underbrace{
            \sum_{k=1}^n
            \mathbf{e}_k \mathbf{e}_k^{\top}
        }_{\Id} \,
        \underbrace{
            \sum_{l=1}^n
            \inprod{\mathbf{e}_l}{\mathbf z} \inprod{\mathbf{e}_l}{\mathbf z}
        }_{\norm{\mathbf z}^2=1}\,
        -\,
        \underbrace{
            \sum_{k=1}^n
            \inprod{\mathbf{e}_k}{\mathbf z} \mathbf{e}_k
        }_{\mathbf z}\,
        \underbrace{
            \sum_{l=1}^n
            \inprod{\mathbf{e}_l}{\mathbf z} \mathbf{e}_l^{\top}
        }_{\mathbf z^\top}
        \\
        & =
        \Id - \mathbf z \mathbf z^{\top}
        \enspace.
    \end{align*}
\end{proof}

    We want to point out that for vanishing drift $\mathbf K = \mathbf 0$, the SDE in Eq.~\eqref{eq:appendix:sdeonsphere1} becomes the well-known \emph{Stroock} equation \cite{Stroock71a,Ito75a} for a spherical Wiener process, \ie, a diffusion process on the sphere $\mathbb S^{n-1}$ that
    is invariant under rotations. In the variational setting of the main manuscript, we use this particular process to define an \emph{uninformative prior}.

    To illustrate that the condition $\mathbf K = \mathbf 0$ defines a process $Z$ that is \textit{invariant under rotations} $\mathbf R \in \SO{n}$, we consider the process $\mathbf R Z$, cf. \cite[p. 220f.]{Ito75a}.
    For brevity, we introduce $f: \mathbf z\mapsto -\alpha^2 (n-1)/2 \ \mathbf z $ and $\sigma: \mathbf z \mapsto \alpha(\Id - \mathbf z \mathbf z^{\top})$ so that Eq.~\eqref{eq:appendix:sdeonsphere1} becomes $\dd Z_t = f(Z_t) + \sigma(Z_t) dW_t$.
    Notably, the identities $\mathbf R f(Z_t) = f(\mathbf R Z_t)$ and $\mathbf R \sigma(Z_t) \mathbf R^{\top} = \sigma(\mathbf R Z_t)$ hold. Thus, the process $\mathbf R Z_t$ satisfies
    \begin{align*}
        \dd(\mathbf R Z_t)
        & = \mathbf R \dd Z_t \\
        & = \mathbf R f(Z_t) \dd t + \mathbf R \sigma(Z_t) \dd W_t \\
        & = \mathbf R f(Z_t) \dd t + \mathbf R \sigma(Z_t) \mathbf R^{\top} \mathbf R  \dd W_t\\
        & = f(\mathbf R Z_t) \dd t + \sigma(\mathbf R Z_t) \dd (\mathbf R W_t) \\
        & = f(\mathbf R Z_t) \dd t + \sigma(\mathbf R Z_t) \dd \tilde W_t
        \enspace.
    \end{align*}
    As $\tilde W_t = \mathbf R W_t$ is again a Wiener process, the processes $Z_t$ and $\mathbf R Z_t$ follow the same SDE.
    The invariance of $Z$ \wrt \ rotations is also apparent from the SDE in Eq.~\eqref{eq:appendix:sdeonsphere0} as $U_t$ is already invariant under rotations, \ie, at each $t\in[0,T]$ the random variables $\tilde{U}_t  = \mathbf R^{\top} U_t \mathbf R$ and $U_t$ are equal in distribution.

\subsection{Kullback-Leibler (KL) divergences}
\label{subsection:appendix:kldivs}

First, in \cref{lem:appendix:girsanov}, we state the general form of the
KL divergence between path distributions of stochastic processes in $\mathbb{R}^{n}$.

\begin{lemma}
    \label{lem:appendix:girsanov}
    Let $X$ and $Y$ be stochastic processes with (path) distributions $\mathbb{P}^X$ and $\mathbb{P}^Y$, respectively, and let $W$ be a vector-valued Wiener process; all with state space $\mathbb R^n$.
    Further, let $\mathcal P^{X_0}$ and $\mathcal P^{Y_0}$ be distributions on $\mathbb R^n$ with finite second moments.
    If $X$ and $Y$ solve\footnote{We further assume that $f$, $g$ and $\sigma$ satisfy the assumptions of \cite[Theorem 5.2.1]{Oksendal95} such that a unique  solution exists (for a given $X_0, Y_0 \stackrel{\text{a.s.}}{=} \xi$).} the following Itô SDEs on $[0,T]$,
    \begin{alignat}{3}
        \dd X_t &= f(X_t,t) \dd t + \sigma(X_t, t) \dd W_t \enspace,
        \quad&&
        X_0 &&\sim \mathcal{P}^{X_0} \enspace,
        \\
        \dd Y_t &= g(Y_t,t) \dd t + \sigma(Y_t, t) \dd W_t \enspace,
        \quad&&
        Y_0 &&\sim \mathcal{P}^{Y_0} \enspace,
    \end{alignat}
    then the KL divergence between $\mathbb{P}^Y$ and $\mathbb{P}^X$ is given by
    \begin{equation}
        \begin{split}
            &\KL{\mathbb{P}^Y}{\mathbb{P}^X}=\\
            &
            \KL{\mathcal{P}^{Y_0}}{\mathcal{P}^{X_0}}
            +
            \frac 1 2 \int_0^T\!\! \int_{\mathbb R^n}
            \!q_{Y_t}(\mathbf{z})
            \left[
                f(\mathbf{z},t) - g(\mathbf{z},t)
            \right]^\top
            \boldsymbol{\Sigma}^{+}(\mathbf{z},t)
            \left[
                f(\mathbf{z},t) - g(\mathbf{z},t)
            \right]
            \dd \mathbf{z} \dd t
            \enspace.
        \end{split}
        \end{equation}
    Here, $f$ and $g$ denote vector-valued drift coefficients, and $\sigma$ denotes the matrix-valued diffusion coefficient.
    Furthermore, $q_{Y_t}$ denotes the marginal density of $Y_t$ (\ie, the process $Y$ at time $t$) and
    $\boldsymbol{\Sigma}^{+}$ denotes the pseudo inverse of $\boldsymbol{\Sigma}=\sigma \sigma^\top$.
\end{lemma}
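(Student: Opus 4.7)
The plan is to reduce the KL divergence between path distributions to two pieces—an initial-state contribution and an integrated drift-mismatch contribution—via the chain rule of KL divergence, and then to invoke Girsanov's theorem to compute the second piece. Concretely, I would first condition on the initial state and use the disintegration
$$\KL{\mathbb P^Y}{\mathbb P^X} = \KL{\mathcal P^{Y_0}}{\mathcal P^{X_0}} + \mathbb E_{\mathbf z \sim \mathcal P^{Y_0}}\bigl[\KL{\mathbb P^Y_{|Y_0=\mathbf z}}{\mathbb P^X_{|X_0=\mathbf z}}\bigr],$$
where $\mathbb P^X_{|X_0=\mathbf z}$ and $\mathbb P^Y_{|Y_0=\mathbf z}$ denote the distributions of the respective solutions conditioned on starting at $\mathbf z$. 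This is valid since $\mathcal P^{X_0}$ and $\mathcal P^{Y_0}$ are the initial marginals and the conditional laws are determined by the SDEs.

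Next, for each fixed starting point $\mathbf z$, I would apply Girsanov's theorem to the two SDEs, which share the diffusion coefficient $\sigma$ but differ in drift by $\Delta(\mathbf y,t) = g(\mathbf y, t) - f(\mathbf y, t)$. When $\sigma(\mathbf y, t)$ is invertible, the Radon–Nikodym derivative takes the standard form
$$\frac{d\mathbb P^Y_{|Y_0=\mathbf z}}{d\mathbb P^X_{|X_0=\mathbf z}} = \exp\Bigl(\int_0^T u_t^\top dW_t - \tfrac 1 2 \int_0^T \|u_t\|^2 dt\Bigr), \quad u_t = \sigma(Y_t,t)^{-1}\Delta(Y_t,t).$$
Taking the log, applying $\mathbb E_{\mathbb P^Y}$, and using that under $\mathbb P^Y$ the process $W$ has drift $u_t$ absorbed (so the stochastic integral becomes $\int_0^T \|u_t\|^2 dt$ plus a martingale with zero expectation) gives
$$\KL{\mathbb P^Y_{|Y_0=\mathbf z}}{\mathbb P^X_{|X_0=\mathbf z}} = \tfrac 1 2 \mathbb E_{\mathbb P^Y_{|Y_0=\mathbf z}}\Bigl[\int_0^T \Delta(Y_t,t)^\top (\sigma\sigma^\top)^{-1}(Y_t,t) \Delta(Y_t,t)\,dt\Bigr].$$
Fubini and the definition of the marginal density $q_{Y_t}$ (integrated against $\mathcal P^{Y_0}$ in the outer expectation) then yield the claimed formula.

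The main subtlety—and what I expect to be the primary obstacle—is that in our application $\sigma$ is typically rank-deficient (\eg on $\mathbb S^{n-1}$ it projects onto the tangent space), so $\sigma\sigma^\top$ is singular and the pseudo-inverse $\boldsymbol \Sigma^+$ appears in the statement. To handle this rigorously, I would argue that the drift difference $\Delta(\mathbf z,t)$ almost surely lies in the range of $\sigma(\mathbf z,t)$, so that the equation $\sigma(\mathbf z, t)\, u = \Delta(\mathbf z, t)$ admits the minimal-norm solution $u = \sigma^+(\mathbf z,t)\Delta(\mathbf z,t)$ with $\|u\|^2 = \Delta^\top \boldsymbol\Sigma^+ \Delta$. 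In our setting this range condition is automatic because both $f$ and $g$ are constructed from the same Lie algebra action (the drift mismatch is $\mathbf K^{\boldsymbol\phi}(t)\mathbf z$, which lies in the tangent space at $\mathbf z$, hence in the range of $\sigma$). Secondary technical points—verifying Novikov's condition so that the exponential martingale is a true martingale and the change of measure is legitimate—follow from the linear growth assumptions on the coefficients inherited from \cite[Theorem 5.2.1]{Oksendal95}. Once these are in place, substituting back into the chain-rule decomposition gives the lemma; the specialization to Eq.~\eqref{eqn:klonsphere} for the sphere is then immediate by computing $\boldsymbol\Sigma^+(\mathbf z) = \alpha^{-2}(\Id - \mathbf z \mathbf z^\top)$ and noting $\Delta(\mathbf z, t) = \mathbf K^{\boldsymbol\phi}(\mathbf x)(t)\mathbf z$ is already tangent to the sphere.
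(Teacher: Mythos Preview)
The paper does not actually prove this lemma: it simply states that a proof via Girsanov's theorem can be found in \cite[Section~A]{Sottinen08} and \cite[Lemma~2.1]{SanzAlonso17a}, and that a heuristic derivation via the limit of an Euler discretization appears in \cite[Section~4]{Opper19a}. Your proposal follows exactly the Girsanov route the paper points to---chain-rule decomposition into an initial-state KL plus a path-conditional KL, then Girsanov for the latter---so in that sense you are aligned with the paper's (outsourced) approach and your sketch is more detailed than anything the paper itself supplies.

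One comment worth making: your handling of the degenerate diffusion is appropriate in spirit, but note that the lemma is stated for general $f,g,\sigma$ satisfying Øksendal's hypotheses, without any explicit assumption that $f-g$ lies in the range of $\sigma$. In that generality the formula with $\boldsymbol\Sigma^+$ is only correct when such a range condition holds (otherwise the two path measures are mutually singular and the KL is infinite). You correctly identify this as the main subtlety and verify it for the application on $\mathbb S^{n-1}$, which is all the paper ultimately needs; just be aware that the lemma as stated is slightly informal on this point, and the cited references presumably impose the appropriate hypothesis.
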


For a proof via \emph{Girsanov's theorem}, we refer the reader to \citearefs[Section A]{Sottinen08} and \citearefs[Lemma 2.1]{SanzAlonso17a}. For a heuristic derivation via the limit of an Euler discretization, we refer to \cite[Section 4]{Opper19a}.

\vskip1ex
The following corollary specifies \cref{lem:appendix:girsanov} to
a particular class of SDEs that is compatible with the SDEs introduced in
\cref{subsection:appendix:sdes_on_homogeneous_spaces}.

\vskip2ex
\begin{corollary} \label{prop:appendix:kl}
    Let $\mathbf{V}_{0}^{X}, \mathbf{V}_{0}^{Y}:[0,T] \to \mathbb{R}^{n \times n} $ be continuous maps and let $\mathbf{V}_1,\dots, \mathbf V_m\in \mathbb R^{n\times n}$.
    Further, let $w^1, \dots, w^m$ be independent scalar-valued Wiener processes and let $\mathcal P^{X_0}$ and $\mathcal P^{Y_0}$ be distributions on $\mathbb{R}^n$ with finite second moments.
    If $X$ and $Y$ are stochastic processes on $[0,T]$ with (path) distributions $\mathbb{P}^X$ and $\mathbb{P}^Y$, resp., that solve
    \begin{alignat}{3}
        \label{eqn:appendix:priorprocess} \dd X_t &= \left( \mathbf{V}_{0}^{X}(t) \dd t + \sum_{i=1}^{m} \dd w_t^i\mathbf{V}_i \right) X_t \enspace,\quad&&X_0 &&\sim \mathcal{P}^{X_0} \enspace, \\
        \dd Y_t &= \left( \mathbf{V}_{0}^{Y}(t) \dd t + \sum_{i=1}^{m} \dd w_t^i\mathbf{V}_i \right)Y_t \enspace,\quad&&Y_0 &&\sim \mathcal{P}^{Y_0}\enspace,
        \label{prop:kl:eq2}
    \end{alignat}
    then, the KL divergence between $\mathbb{P}^{Y}$ and $\mathbb{P}^{X}$ is given by
    \begin{equation}
    \begin{split}
        \KL{\mathbb{P}^{Y}}{\mathbb{P}^{X}}
        =
        \KL{\mathcal{P}^{Y_0}}{\mathcal{P}^{X_0}} +  \frac 1 2 \int_0^T \!\! \int_{\mathbb R^n}\! q_{Y_t}(\mathbf{z})
        \big[ \mathbf{V}_0^{\Delta}(t) \mathbf{z} \big]^\top  \boldsymbol{\Sigma}^{+}(\mathbf{z}) \big[ \mathbf{V}_0^{\Delta}(t) \mathbf{z} \big] \dd \mathbf{z} \dd t
        \enspace,
    \end{split}
    \end{equation}
    where $\mathbf{V}_0^{\Delta} = \mathbf{V}_{0}^{X} - \mathbf{V}_{0}^{Y}$, and $\boldsymbol{\Sigma}^{+}(\mathbf{z})$ denotes the pseudo inverse of
    $\boldsymbol{\Sigma}(\mathbf{z}) = \sum_{i=1}^{m} \mathbf{V}_i \mathbf{z}\mathbf{z}^\top \mathbf{V}_i^\top$.
\end{corollary}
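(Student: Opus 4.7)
The plan is to derive the corollary as a direct specialization of \cref{lem:appendix:girsanov}. First, I would cast each SDE in Eq.~\eqref{eqn:appendix:priorprocess} and Eq.~\eqref{prop:kl:eq2} into the format required by the lemma. Writing $W = [w^1, \dots, w^m]^\top$ for the $m$-dimensional Wiener process, I identify the drift coefficients as the (linear) maps $f:(\mathbf z,t) \mapsto \mathbf V_0^X(t) \mathbf z$ and $g:(\mathbf z,t) \mapsto \mathbf V_0^Y(t) \mathbf z$, and the common diffusion coefficient as the $n \times m$ matrix-valued map $\sigma:(\mathbf z,t) \mapsto [\mathbf V_1 \mathbf z \mid \mathbf V_2 \mathbf z \mid \cdots \mid \mathbf V_m \mathbf z]$. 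A short check then shows $\sigma(X_t,t)\, \dd W_t = \sum_{i=1}^m \mathbf V_i X_t \, \dd w_t^i$, so that the SDEs match the form used in \cref{lem:appendix:girsanov}.

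Next, I would verify the hypotheses that guarantee a unique strong solution (\ie, the assumptions of \cite[Theorem 5.2.1]{Oksendal95}): continuity of $\mathbf V_0^X, \mathbf V_0^Y$ in $t$ and finite second moments of $\mathcal P^{X_0}, \mathcal P^{Y_0}$ are explicitly assumed, and linearity of $f$, $g$, and $\sigma$ in $\mathbf z$ (with bounded-in-time coefficients on $[0,T]$) yields global Lipschitz and linear growth bounds immediately. Having ensured applicability, I compute the two inputs that appear inside the integrand of the lemma: the drift difference is $f(\mathbf z,t) - g(\mathbf z,t) = \mathbf V_0^\Delta(t)\mathbf z$ with $\mathbf V_0^\Delta = \mathbf V_0^X - \mathbf V_0^Y$, and the diffusion covariance is
\begin{equation*}
\boldsymbol\Sigma(\mathbf z,t) = \sigma(\mathbf z,t)\,\sigma(\mathbf z,t)^\top = \sum_{i=1}^m \mathbf V_i \mathbf z \mathbf z^\top \mathbf V_i^\top,
\end{equation*}
which is time-independent and agrees with the expression for $\boldsymbol\Sigma$ given in the statement of the corollary (and already derived in Eq.~\eqref{eq:h_times_htop}).

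Substituting these expressions into the formula supplied by \cref{lem:appendix:girsanov} yields exactly the claimed identity, with the pseudo-inverse $\boldsymbol\Sigma^+(\mathbf z)$ appearing instead of a genuine inverse because $\boldsymbol\Sigma(\mathbf z)$ need not be of full rank (this happens, e.g., in the sphere case of \cref{subsection:appendix:sdes_on_sphere}, where $\boldsymbol\Sigma(\mathbf z) = \Id - \mathbf z \mathbf z^\top$ has rank $n-1$). The only subtle issue—and the one I would spend a moment addressing carefully—is confirming that the pseudo-inverse in \cref{lem:appendix:girsanov} is well-defined on the range of the drift difference; here this reduces to observing that $[f-g](\mathbf z,t) = \mathbf V_0^\Delta(t)\mathbf z$ lies in the column span of $\sigma(\mathbf z,t)$ whenever we are in the setting of interest (e.g., on the sphere, $\mathbf V_0^\Delta(t)\mathbf z$ is tangent to $\mathbb S^{n-1}$ at $\mathbf z$ so lies in the image of $\Id - \mathbf z\mathbf z^\top$), so the quadratic form is finite and the lemma applies without modification.
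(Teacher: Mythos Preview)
Your proposal is correct and follows essentially the same approach as the paper: rewrite the diffusion term as $\sigma(\mathbf z,t)=[\mathbf V_1\mathbf z,\dots,\mathbf V_m\mathbf z]$ (cf.\ Eq.~\eqref{eq:appendix:VZdW}), compute $\sigma\sigma^\top=\sum_i\mathbf V_i\mathbf z\mathbf z^\top\mathbf V_i^\top$ (cf.\ Eq.~\eqref{eq:h_times_htop}), and apply \cref{lem:appendix:girsanov} directly. Your additional checks on the Lipschitz/growth hypotheses and the range condition for the pseudo-inverse are not in the paper's (very terse) proof, but they are sound elaborations rather than a different route.
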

\begin{proof}
    The noise term of the SDEs can be written as $\sigma(Z_t, t) dW_t$ where $\sigma:\mathbb R^n \times [0,T] \to \mathbb R^{n\times n}$ is specified in Eq.~\eqref{eq:appendix:VZdW}.
    Thus, the corollary is a direct consequence of \cref{lem:appendix:girsanov}, as
    $\sigma(\mathbf z,t) \sigma(\mathbf z,t)^{\top}
    =
    \sum_{i=1}^m \mathbf{V}_i \mathbf{z}\mathbf{z}^\top \mathbf{V}^\top_i$,
    see Eq.~\eqref{eq:h_times_htop}.
\end{proof}

Finally, the following proposition (\cref{appendix:proposition:klonson}) considers exactly the SDEs discussed in the main manuscript. The process $X$ is the uninformative prior process that we select in advance and the process $Y$ is the posterior process that we learn from the data. Thus, Eq.~\eqref{eqn:appendix:klsphere} is the KL-term that appears in the ELBO in our variational setting.

\begin{proposition}[KL divergence for one-point motions on $\mathbb{S}^{n-1}$]
    \label{appendix:proposition:klonson}
    Let $\mathbf{K}(t):[0,T] \to \mathfrak{so}(n)$ be a continuous map and let $\alpha \neq 0$.
    If $X$ and $Y$ are stochastic processes on $[0,T]$ with (path) distributions $\mathbb{P}^X$ and $\mathbb{P}^Y$, resp., that solve the following Itô SDEs
    \begin{alignat}{3}
        \dd &X_t
        =  - \alpha^2 \tfrac{n-1}{2}
            X_t \dd t
        + \alpha \dd U_t X_t
        \quad
        &&X_0 &&\sim \mathcal{P}^{X_0}_{\mathbb S^{n-1}} \enspace,
        \label{prop:kl:sdex}
        \\
        \dd &Y_t
        =  \left(
                \mathbf K(t)
                - \alpha^2 \tfrac{n-1}{2}  \Id
            \right) X_t \dd t
        + \alpha \dd U_t Y_t
        \quad
        &&Y_0 &&\sim \mathcal{P}^{Y_0}_{\mathbb S^{n-1}} \enspace,
        \label{prop:kl:sdey}
    \end{alignat}
    where $\mathcal{P}^{X_0}_{\mathbb S^{n-1}}$ and $\mathcal{P}^{Y_0}_{\mathbb S^{n-1}}$ are distributions supported on the unit $n$-sphere $\mathbb S^{n-1}$ with finite second moment.
    Then, the KL divergence between $\mathbb{P}^{Y}$ and $\mathbb{P}^{X}$ is given by
    \begin{equation} \label{eqn:appendix:klsphere}
        \KL{\mathbb{P}^{Y}}{\mathbb{P}^{X}}
        =
        \KL{\mathcal{P}^{Y_0}_{\mathbb S^{n-1}}}{\mathcal{P}^{X_0}_{\mathbb S^{n-1}}}  +
        \frac{1}{2 \alpha^{2}} \int_0^T \int_{\mathbb S^{n-1}} q_{Y_t}(\mathbf{z})
        \norm{\mathbf{K}(t) \mathbf{z}}^2  \dd \mathbf{z} \dd t
         \enspace .
    \end{equation}
\end{proposition}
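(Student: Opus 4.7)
The plan is to reduce the statement to the already-proven Corollary~\ref{prop:appendix:kl} via an explicit computation of the pseudo-inverse $\boldsymbol{\Sigma}^+$ that arises from the specific choice of noise generators on the sphere.

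First, I would rewrite both SDEs~\eqref{prop:kl:sdex} and \eqref{prop:kl:sdey} in the linear form required by Corollary~\ref{prop:appendix:kl}. Applying Lemma~\ref{prop:appendix:sdeonsphere} in reverse (\emph{i.e.}, replacing the diffusion term $\alpha(\Id-Z_tZ_t^\top)\dd W_t$ by $\sum_{i}\dd w_t^i\,\mathbf V_i Z_t$ with $\{\mathbf V_i\}_{i=1}^{n(n-1)/2}=\{\alpha \mathbf{E}_{kl}\}_{1\le k<l\le n}$) produces SDEs of exactly the type treated in Corollary~\ref{prop:appendix:kl}, with drift coefficients
\[
\mathbf V_0^{X}(t)=-\alpha^2\tfrac{n-1}{2}\Id,\qquad
\mathbf V_0^{Y}(t)=\mathbf K(t)-\alpha^2\tfrac{n-1}{2}\Id,
\]
so that $\mathbf V_0^{\Delta}(t)=\mathbf V_0^{X}(t)-\mathbf V_0^{Y}(t)=-\mathbf K(t)$. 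Since the quadratic form $\mathbf v\mapsto \mathbf v^{\top}\boldsymbol{\Sigma}^{+}\mathbf v$ is invariant under $\mathbf v\mapsto -\mathbf v$, the sign is immaterial.

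Next, I would compute $\boldsymbol{\Sigma}(\mathbf z)=\sum_i \mathbf V_i\mathbf z\mathbf z^{\top}\mathbf V_i^{\top}$. This is exactly the calculation carried out inside the proof of Lemma~\ref{prop:appendix:sdeonsphere}, and for $\mathbf z\in\mathbb S^{n-1}$ it yields $\boldsymbol{\Sigma}(\mathbf z)=\alpha^{2}(\Id-\mathbf z\mathbf z^{\top})$. Since $\Id-\mathbf z\mathbf z^{\top}$ is the orthogonal projection onto $\mathbf z^{\perp}$, it is idempotent and symmetric, hence its own Moore--Penrose pseudo-inverse; therefore $\boldsymbol{\Sigma}^{+}(\mathbf z)=\alpha^{-2}(\Id-\mathbf z\mathbf z^{\top})$.

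Finally, I would use the skew-symmetry of $\mathbf K(t)\in\mathfrak{so}(n)$ to observe that $\mathbf z^{\top}\mathbf K(t)\mathbf z=0$, so $\mathbf K(t)\mathbf z$ lies in $\mathbf z^{\perp}$ and is fixed by the projection $\Id-\mathbf z\mathbf z^{\top}$. Consequently,
\[
[\mathbf V_0^{\Delta}(t)\mathbf z]^{\top}\boldsymbol{\Sigma}^{+}(\mathbf z)[\mathbf V_0^{\Delta}(t)\mathbf z]
=\alpha^{-2}\,\|\mathbf K(t)\mathbf z\|^{2}.
\]
Substituting this into the formula from Corollary~\ref{prop:appendix:kl} and restricting the inner integral to $\mathbb S^{n-1}$ (because the marginal $q_{Y_t}$ is supported there, as $Y$ stays on the sphere by Lemma~\ref{lemma:appendix:sdeinliegroup} applied to $\mathcal G=\SO{n}$ with $\mathbf P=\Id$) yields Eq.~\eqref{eqn:appendix:klsphere}.

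The only potentially delicate point is the handling of the pseudo-inverse: $\boldsymbol{\Sigma}(\mathbf z)$ is singular (rank $n-1$ on $\mathbb S^{n-1}$), so a priori Corollary~\ref{prop:appendix:kl} only guarantees finiteness of the KL when $\mathbf V_0^{\Delta}(t)\mathbf z$ lies in the range of $\boldsymbol{\Sigma}(\mathbf z)$; this is precisely the content of the skew-symmetry argument above, which certifies both that the bound is finite and that the projection acts trivially. The remaining steps are straightforward algebraic substitution.
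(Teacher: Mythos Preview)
Your proposal is correct and follows essentially the same argument as the paper. The only cosmetic difference is that the paper first invokes Lemma~\ref{prop:appendix:sdeonsphere} to pass to the Stroock form $\alpha(\Id-Z_tZ_t^\top)\dd W_t$ and then applies the general Girsanov result (Lemma~\ref{lem:appendix:girsanov}) directly, whereas you stay in the $\sum_i \dd w_t^i\mathbf V_i$ form and invoke Corollary~\ref{prop:appendix:kl}; since the SDEs in the proposition are already written with the matrix noise $\dd U_t$ (which \emph{is} the $\sum_i \dd w_t^i\mathbf V_i$ form), your ``reverse'' application of Lemma~\ref{prop:appendix:sdeonsphere} is in fact unnecessary, but harmless. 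The substantive steps---computing $\boldsymbol\Sigma(\mathbf z)=\alpha^2(\Id-\mathbf z\mathbf z^\top)$, recognizing that a projection is its own pseudo-inverse, using $\mathbf z^\top\mathbf K(t)\mathbf z=0$ from skew-symmetry, and restricting the integral to $\mathbb S^{n-1}$---are identical to the paper's.
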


\begin{proof}
    By \cref{prop:appendix:sdeonsphere}, we can replace Eq.~\eqref{prop:kl:sdex} and Eq.~\eqref{prop:kl:sdey} by
    \begin{alignat}{3}
        \dd X_t &= - \alpha^2 \tfrac {n-1} 2 \Id \dd t +  \alpha \left(\Id - X_t X_t^\top\right) \dd W_t \enspace,
        \quad
        &&X_0 &&\sim \mathcal{P}^{X_0}_{\mathbb S^{n-1}} \enspace,
        \\
        \dd Y_t &= \left( \mathbf K(t)- \alpha^2 \tfrac {n-1} 2 \Id  \right) \dd t +  \alpha \left(\Id - Y_t Y_t^\top\right) \dd W_t \enspace,
        \quad
        &&Y_0 &&\sim \mathcal{P}^{Y_0}_{\mathbb S^{n-1}} \enspace,
        \label{prop:kl:eq2sphere}
    \end{alignat}
    as the solutions follow the same distribution $\mathbb P^X$ and $\mathbb P^Y$, respectively.

    The proposition is thus a consequence of \cref{lem:appendix:girsanov}
    for a specific choice of $f,g$ and $\sigma$, \ie,
    \begin{align}
    \begin{split}
        &\KL{\mathbb{P}^{Y}}{\mathbb{P}^{X}}
        \\
        = &\KL{\mathcal{P}^{Y_0}_{\mathbb S^{n-1}}}{\mathcal{P}^{X_0}_{\mathbb S^{n-1}}}  +
        \frac{1}{2} \int_0^T \!\int_{\mathbb R^n}\! q_{Y_t}\left[\mathbf{K}(t)\mathbf{z}\right]^{\top} \bigg ( \alpha^{2} (\Id - \mathbf{z}\mathbf{z}^\top) (\Id - \mathbf{z} \mathbf{z}^\top)^{\top} \bigg )^{+} \left[ \mathbf{K}(t)\mathbf{z} \right] \dd \mathbf{z} \dd t
        \enspace .
        \label{eq:kl_sphere_intermediate}
    \end{split}
    \end{align}
    To finish the proof, we highlight two points.
    First, \cref{prop:appendix:sdeonsphere} ensures that the process $Y$ evolves on the unit $n$-sphere $\mathbb S^{n-1}$, so it suffices to integrate $q_{Y_t}$ over $\mathbb S^{n-1}\subset \mathbb R^n$ instead of over the entire $\mathbb R^n$ as in Eq.~\eqref{eq:kl_sphere_intermediate}.
    Second, the integrand simplifies to
    \begin{align}
        \left[\mathbf{K}(t)\mathbf{z}\right]^{\top}
        \bigg (
            \alpha^{2} (\Id - \mathbf{z}\mathbf{z}^\top) (\Id - \mathbf{z} \mathbf{z}^\top)^{\top}
        \bigg )^{+}
        \left[ \mathbf{K}(t)\mathbf{z} \right]
        &=
        \alpha^{-2} \left[\mathbf{K}(t)\mathbf{z}\right]^{\top}
        (\Id - \mathbf{z}\mathbf{z}^\top)
        \left[ \mathbf{K}(t)\mathbf{z} \right]
        \\
        &=
        \alpha^{-2} \norm{\mathbf{K}(t) \mathbf{z}}^2 \enspace,
    \end{align}
    where the first equality holds because $(\Id - \mathbf{z} \mathbf{z}^\top)$ is a projection, and the second equality holds because $\mathbf z^\top \mathbf K(t) \mathbf z = 0$ as $\mathbf{K}(t)\in \mathfrak g$ is skew-symmetric.
\end{proof}

\printbibliography[
    title = References \normalfont{(Supplementary material)},
    segment=2,
    ]
\end{refsegment}

\end{document}